\theoremstyle{plain}
\newtheorem{thm}{Theorem}
\theoremstyle{definition}
\theoremstyle{remark}
\icmltitlerunning{
A Universal Approximation Theorem for Nonlinear Resistive Networks
}
\begin{document}

\twocolumn[
\icmltitle{
A Universal Approximation Theorem for Nonlinear Resistive Networks
}


\begin{icmlauthorlist}
\icmlauthor{Benjamin Scellier}{rain}
\icmlauthor{Siddhartha Mishra}{eth}
\end{icmlauthorlist}

\icmlaffiliation{rain}{Rain AI, San Francisco, USA}
\icmlaffiliation{eth}{SAM, D-Math and ETH AI Center, ETH Zurich, Switzerland}

\icmlcorrespondingauthor{Benjamin Scellier}{benjamin@rain.ai}

\icmlkeywords{universal function approximation,resistor network,nonlinear resistive network,deep resistive network,equilibrium propagation,self-learning machine}

\vskip 0.3in
]



\printAffiliationsAndNotice{} 

\begin{abstract}
Resistor networks have recently been studied as analog computing platforms for machine learning, particularly due to their compatibility with the Equilibrium Propagation training framework. In this work, we explore the computational capabilities of these networks. We prove that electrical networks consisting of voltage sources, linear resistors, diodes, and voltage-controlled voltage sources (VCVSs) can approximate any continuous function to arbitrary precision. Central to our proof is a method for translating a neural network with rectified linear units into an approximately equivalent electrical network comprising these four elements. Our proof relies on two assumptions: (a) that circuit elements are ideal, and (b) that variable resistor conductances and VCVS amplification factors can take any value (arbitrarily small or large). Our findings provide insights that could guide the development of universal self-learning electrical networks.
\end{abstract}

\section{Introduction}

Machine learning (ML) has become an integral part of modern technology, with neural networks playing a central role in many ML applications. A fundamental characteristic of neural networks is their ability to implement or approximate arbitrary functions: they are \emph{universal function approximators}. However, training and deploying neural networks on graphics processing units (GPUs) is energy-intensive, primarily due to the separation of computation and memory in GPUs, and the digital nature of computing in these processors. As the need for energy-efficient ML systems becomes pressing, researchers explore alternative physical substrates based on analog physics as a replacement for GPU-based neural networks \citep{momeni2024training}. One key area of research has been the development of analog electrical versions of neural networks, which aim to implement their arithmetic operations -- including matrix-vector multiplications and nonlinear activation functions -- and the backpropagation algorithm in the analog domain. However, these systems are typically mixed-signal (analog/digital), executing the nonlinear activation functions in the digital domain, and therefore necessitating the use of analog-to-digital converters and digital-to-analog converters \citep{xiao2020analog}. This hurdle has motivated researchers to develop and explore alternative training frameworks, potentially better suited for training analog physical systems \citep{scellier2017equilibrium,scellier2022agnostic,wright2022deep,lopez2023self,wanjura2024fully,momeni2023backpropagation} (for recent reviews, see also \cite{momeni2024training,stern2023learning}). However, the physical systems compatible with these training frameworks are typically not isomorphic to neural networks. In particular, unlike neural networks, the computational capabilities and expressivity of these systems remain largely unexplored. Universality has been studied in optical systems \citep{marcucci2020theory} (see also \cite{larocque2021universal} where universal linear optical transformations are studied), and a recent study has shown how neural ordinary differential equations can be implemented in analog electrical circuits \citep{gao2023kirchhoffnet}.

We investigate the question of universal function approximation within the context of resistor networks. These electrical networks are composed of variable resistors that serve as trainable weights and voltage sources representing input variables. The steady state of such electrical networks is characterized by the principle of least power dissipation. Although resistor networks have been studied as platforms for analog computing since the 1980s \citep{harris1989resistive}, their applications were initially limited to inference tasks. More recently, the equilibrium propagation (EP) framework \citep{scellier2017equilibrium,kendall2020training} has provided a method to train these networks. EP uses local contrastive learning rules to adjust the resistor conductances to perform gradient descent on a cost function \citep{kendall2020training,anisetti2024frequency}. Experimental implementations of resistor networks that self-train using local learning rules closely related to EP have also been realized \citep{dillavou2022demonstration,dillavou2024machine}, demonstrating the feasibility of this approach. Additionally, an experimental demonstration on a memristor crossbar array \citep{yi2023activity} suggests potential energy savings of up to four orders of magnitude compared to neural networks trained on GPUs. Importantly, EP is also applicable in nonlinear networks, without requiring detailed knowledge of the nonlinear elements' characteristics (see Appendix~\ref{sec:equilibrium-propagation} for details). These and other studies have renewed interest in resistor networks \citep{wycoff2022desynchronous,stern2022physical,stern2024training,stern2024physical,kiraz2022impacts,watfa2023energy,oh2023memristor,oh2025defect,guzman2024microscopic}. However, the computational expressivity of these networks has remained unexplored.

In this paper, we derive sufficient conditions for resistor networks to be universal function approximators. A recent line of work has shown that nonlinear computation can be achieved even using purely linear physics \citep{yildirim2024nonlinear,xia2024nonlinear,wanjura2024fully} ; in an analog electrical circuit, this can be achieved by viewing tunable resistances as inputs (see \cite{wanjura2024fully} where the method is briefly outlined). Here, however, we consider networks in which inputs are supplied via voltage sources, tunable resistors serve as adjustable parameters, and the voltage response is read at a set of output branches. Clearly, a network of this kind composed solely of ohmic resistors and voltage sources can only implement linear functions. In such networks, achieving nonlinear computation - which is a prerequisite for any sort of universal computation - requires nonlinear elements. Furthermore, in a network with only voltage sources and passive (linear or nonlinear) resistors, the voltage across any branch is bounded by the sum of the voltages from the sources. Following Kendall \emph{et al.} \cite{kendall2020training}, we address these two issues using diodes for nonlinearities, and voltage-controlled voltage sources (VCVSs) for amplification, allowing the implementation of functions with output voltages greater than input voltages. We prove that, under suitable assumptions, these four elements - ohmic resistors, voltage sources, diodes and VCVSs - are sufficient for universal function approximation. The assumptions are that (a) the elements are ideal, and (b) the conductances of variable resistors and the amplification factors of the VCVSs can take arbitrary values (arbitrarily small or arbitrarily large). To prove this result, we consider a specific nonlinear resistive network architecture similar to the one proposed by Kendall \emph{et al.} \cite{kendall2020training} which we refer to as a `deep resistive network' (DRN), and we show that DRNs can approximate neural networks with rectified linear units (ReLU) to arbitrary accuracy. Since ReLU neural networks are known to be universal function approximators \citep{leshno1993multilayer,yarotsky2017error}, it follows that DRNs are universal approximators, too. Our work thus presents the first universal approximation result for a class of analog electrical networks compatible with EP. Additionally, our proof offers a systematic method for translating a ReLU neural network into an equivalent-sized approximate DRN.

We emphasize that, in our construction, DRNs do not \textit{implement} ReLU neural networks but rather \textit{approximate} them, in contrast to the line of research on analog electrical neural networks that use backpropagation for training. Furthermore, although in this work we assume ideal elements to simplify the mathematical analysis of DRNs, EP does not require these assumptions. In fact, EP is agnostic to the characteristics of untrainable components such as diodes (see Appendix~\ref{sec:equilibrium-propagation} for details).

\begin{figure*}[h!]
\begin{center}
\includegraphics[width=0.6\textwidth]{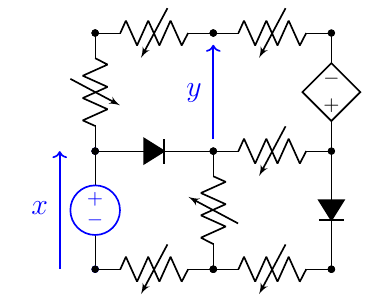}
\end{center}
\caption{
\textbf{A nonlinear resistive network} composed of variable resistors, voltage sources, diodes and voltage-controlled voltage sources (VCVSs). Voltage sources are used as inputs ($x$) and voltages across pairs of `output nodes' are used as outputs ($y$). Variable resistors represent the trainable weights, diodes introduce nonlinearities, and the VCVSs are used for amplification. Such electrical networks are universal function approximators (Theorem~\ref{thm:nrn-universal}). Note: for simplicity, the input terminals of the VCVS are not represented on the figure ; only its output terminals are displayed.
\label{fig:nonlinear-resistive-network}
}
\end{figure*}

\begin{figure*}[h!]
\begin{center}
\includegraphics[width=0.98\textwidth]{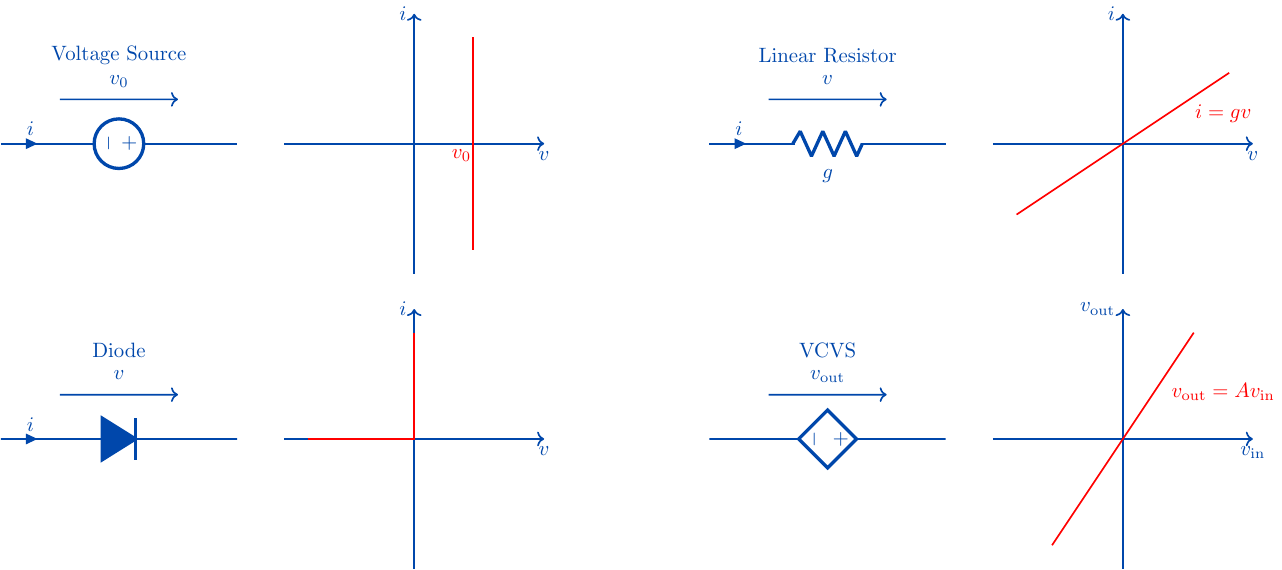}
\end{center}
\caption{
\textbf{Current-voltage (i-v) characteristics of ideal elements.} A linear resistor is characterized by Ohm's law: $i = g v$, where $g$ is the conductance of the resistor ($g=1/r$ where $r$ is the resistance). An ideal diode is characterized by $i=0$ for $v \leq 0$ ("off state", behaving like an open switch), and $v=0$ for $i>0$ ("on state", behaving like a closed switch). An ideal voltage source is characterized by $v=v_0$ for some constant voltage $v_0$. Finally, a voltage-controlled voltage source (VCVS) has four terminals -- two terminals that form an input voltage $v_{\rm in}$ and two terminals that form an output voltage $v_{\rm out}$ -- and is characterized by the relationship $v_{\rm out} = A \; v_{\rm in}$, where $A>0$ is the amplification factor, or gain. Note: for simplicity, the input terminals of the VCVS are not represented on the figure ; only its output terminals are represented.
}
\label{fig:circuit-elements}
\end{figure*}

\section{Nonlinear Resistive Networks}
\label{sec:nonlinear-resistive-network}

We consider an electrical circuit composed of voltage sources, linear resistors, diodes and VCVSs (Figure~\ref{fig:nonlinear-resistive-network}). We call such a circuit a `nonlinear resistive network'. It implements an input-output function as follows: a) a set of voltage sources serve as `input variables' where voltages play the role of input values, and b) a set of branches serve as `output variables' where voltage drops play the role of the model's prediction. Computing with this electrical network proceeds as follows: 1) set the input voltage sources to input values, 2) let the electrical network reach steady state (the DC operating point of the circuit), 3) read the output voltages across output branches.

The steady state of the network is the configuration of branch voltages and branch currents that satisfies all the branch equations, as well as Kirchhoff's current law (KCL) at every node, and Kirchhoff's voltage law (KVL) in every loop. We note that some conditions on the network topology and branch characteristics must be met to ensure that there exists a steady state: for instance, if the network contains a loop formed of voltage sources, then the voltage values must add up to zero in order to obey KVL. Subsequently, we will consider circuits that do not contain loops of voltage sources.

We assume that the above four circuit elements are \textit{ideal}, meaning that their behavior is determined by the following characteristics (see Figure~\ref{fig:circuit-elements}):
\begin{itemize}
\item The current-voltage ($i$-$v$) characteristic of a voltage source satisfies $v=v_0$ for some constant voltage value $v_0$, regardless of the current $i$.
\item A resistor follows Ohm's law, that is, its $i$-$v$ characteristic is $i = g v$, where $g$ is the conductance ($g=1/r$ where $r$ is the resistance).
\item The $i$-$v$ characteristic of a diode satisfies $i=0$ for $v < 0$, and $v=0$ for $i>0$.
\item A VCVS has four terminals: two input terminals whose voltage we denote by $v_{\rm in}$, and two output terminals whose voltage we denote by $v_{\rm out}$. It satisfies the relationship $v_{\rm out} = A v_{\rm in}$, where $A>0$ is the `gain' (or `amplification factor').
\end{itemize}
Under these assumptions, nonlinear resistive networks are universal function approximators in the following sense.

\begin{restatable}[A universal approximation theorem for nonlinear resistive networks]{thm}{nonlinearresistivenetwork}
\label{thm:nrn-universal}
Given any continuous function $f : \mathbb{R}^q \to \mathbb{R}^r$, and given any compact subset $C \subset \mathbb{R}^q$ and any $\epsilon>0$, there exists a nonlinear resistive network with $q$ input voltage sources and $r$ output branches such that, under the above assumptions of ideality, the function $F : \mathbb{R}^q \to \mathbb{R}^r$ that the nonlinear resistive network implements (i.e., the mapping between the inputs and outputs to the network) satisfies
\begin{equation}
\| F(x) - f(x) \| \leq \epsilon, \qquad \forall x \in C.
\end{equation}
\end{restatable}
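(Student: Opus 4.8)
The plan is to prove the theorem by reduction to the known universality of ReLU networks. Since the paper already cites that ReLU neural networks are universal approximators \citep{leshno1993multilayer,yarotsky2017error}, it suffices to show that any ReLU network can be approximated arbitrarily well by a nonlinear resistive network built from the four ideal elements. Concretely, given $f$, $C$, and $\epsilon$, I would first invoke the ReLU universal approximation theorem to obtain a ReLU network $N$ with $\|N(x)-f(x)\|\le\epsilon/2$ for all $x\in C$. The remaining task is then a \emph{circuit synthesis} problem: construct a nonlinear resistive network (a DRN) whose input-output map $F$ satisfies $\|F(x)-N(x)\|\le\epsilon/2$ on $C$, so that the triangle inequality gives the claim.

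The heart of the construction is to realize, using only voltage sources, linear resistors, diodes, and VCVSs, the three primitive operations of a ReLU layer: (i) the affine map $z\mapsto Wz+b$, (ii) the pointwise nonlinearity $u\mapsto\max(u,0)$, and (iii) the composition/cascading of layers without the stages interfering with one another. For the affine part, I would exploit that at steady state the node voltages of a resistive network solve a weighted-average (Kirchhoff) equation: a node connected through conductances $g_i$ to voltage sources holding values $v_i$ settles at $\bigl(\sum_i g_i v_i\bigr)/\sum_i g_i$, a convex combination. By allowing conductances to be arbitrarily large or small (assumption (b)) and by using VCVSs with gain $A$ to rescale these convex combinations, I can synthesize arbitrary affine combinations with both positive and negative weights (negative weights handled by driving a node from the inverted terminal of a source, or by subtracting two nodes). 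The VCVSs are essential here precisely because a passive network confines voltages to the convex hull of the source values, so amplification is needed to escape that bound and to produce the general weight matrix $W$ and bias $b$. For the ReLU, the ideal diode is the natural device: a diode in series clamps at zero, so a branch carrying a voltage $u$ followed by a diode (and a reference) produces $\max(u,0)$ at steady state. I would build one such \emph{gadget} per neuron and wire the diode outputs as the source voltages feeding the next layer's averaging nodes.

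Two structural points need care. First, to make the steady-state analysis tractable and the layers composable, I would arrange the network in a strictly feed-forward cascade using the VCVSs as \emph{buffers}: because an ideal VCVS imposes $v_{\rm out}=A\,v_{\rm in}$ regardless of its input current, its output can drive a downstream stage while its input draws no current from and hence does not load the upstream stage. This isolation is what lets me compute each layer's output as a clean function of the previous layer's output, turning the global nonlinear Kirchhoff equilibrium into a sequential composition that mirrors the ReLU network's forward pass exactly. Second, I must verify existence and uniqueness of the steady state for the constructed DRN (the excerpt flags that topological conditions are needed); since the construction is feed-forward and contains no loops of voltage sources, I would argue existence layer-by-layer, each layer's equilibrium being determined once the previous layer's voltages are fixed.

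I expect the main obstacle to be the faithful realization of a ReLU neuron's affine-then-clamp operation from these passive-plus-VCVS primitives, and in particular reconciling the convex-combination constraint of resistive nodes with the need for arbitrary affine weights and for an exact (rather than merely saturating) $\max(\cdot,0)$. The careful bookkeeping is in choosing the conductances $g_i$ and gains $A$ so that the convex-combination-plus-amplification composite reproduces $Wz+b$, and in handling the approximation error introduced because idealized diodes give an exact ReLU but the averaging-and-scaling step may only approximate the target affine map when conductances are finite; assumption (b) (arbitrarily small/large values) is what I would lean on to drive this per-layer error below the budget needed to achieve the overall $\epsilon/2$ bound, propagating the error control through the finitely many layers by Lipschitz continuity of the ReLU network's layer maps.
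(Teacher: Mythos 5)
Your top-level reduction is exactly the paper's: invoke the ReLU universal approximation theorem (Theorem~\ref{thm:relu-nn-universal}) to obtain a network $\mathcal{N}$ within $\epsilon/2$ of $f$ on $C$, synthesize a circuit within $\epsilon/2$ of $\mathcal{N}$, and conclude by the triangle inequality. Where you genuinely diverge is in the synthesis step. The paper's DRN places VCVSs \emph{only at the input layer} (gain $A^{(0)}$), with voltage-source biases elsewhere; its resistive nodes couple to both the preceding and the following layer, so the feedforward behavior is recovered only perturbatively: layer-$\ell$ conductances are scaled by $\gamma^\ell$, the steady state is characterized by Lemma~\ref{lma:deep-resistive-network}, backward-flowing signals are controlled by the maximum principle of Lemma~\ref{lma:maximum-principle}, and the conclusion is the quantitative bound $v_{2k}^{(\ell)} = A^{(\ell)} s_k^{(\ell)} + O(\gamma\|x\|_\infty)$ of Theorem~\ref{thm:drn-approximate-relu-nn}. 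You instead propose a strictly feedforward cascade with a VCVS buffer per stage, so that each layer's equilibrium is a clean function of the previous layer's output and no $\gamma$-scaling or maximum principle is needed. That route is sound, and in fact stronger than you claim: with ideal elements, the convex-combination node (conductances proportional to $|w_{jk}|$ plus a leak) followed by a gain equal to the normalizing total conductance realizes each affine map \emph{exactly}, and your series-diode gadget (source $u$, diode, pull-down resistor to ground, buffered output) yields $\max(u,0)$ exactly by the on/off case check -- so your worry about per-layer error propagation and leaning on assumption (b) to shrink it is unnecessary; your circuit \emph{implements} the ReLU network rather than approximating it. The trade-off: your construction uses one active four-terminal device per neuron, which is precisely the regime the paper deliberately avoids (it emphasizes that its DRNs approximate rather than implement neural networks, keeping the trainable core purely resistive and EP-compatible, with amplification concentrated at the input); the paper pays for this with conductances spanning $\gamma^\ell$ orders of magnitude and a more delicate perturbative analysis, while you pay with active componentry but get a far simpler proof.

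Two points to make explicit if you write this up. First, your isolation argument assumes the ideal VCVS draws zero input current; this is the standard idealization but is not stated in the paper's Section~\ref{sec:nonlinear-resistive-network}, so it should be added to assumption (a) -- the paper never needs it because its VCVS inputs are driven directly by voltage sources. Second, since the paper takes the gain $A>0$, your handling of negative weights by terminal inversion (effectively gain $-A$, or subtracting two buffered nodes) should be spelled out as a legitimate use of the four-terminal element; once that convention is fixed, nothing in your plan fails.
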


The main aim of the next section is to prove Theorem~\ref{thm:nrn-universal}.

\section{Universal Approximation Through Neural Network Approximation}
\label{sec:sketch-proof}

To prove Theorem~\ref{thm:nrn-universal}, we show that nonlinear resistive networks can approximate ReLU neural networks to any desired accuracy. Since ReLU neural networks are universal approximators, it follows that nonlinear resistive networks are universal approximators, too.

We proceed in three steps. First, we recall the definition of a ReLU neural network (Section~\ref{sec:deep-neural-network}) and their universal approximation property (Theorem~\ref{thm:relu-nn-universal}). Then we present a layered nonlinear resistive network model similar to the one proposed by Kendall \emph{et al.} \citep{kendall2020training} that we call a `deep resistive network' (Section~\ref{sec:deep-resistive-network}). Under the assumptions of Section~\ref{sec:nonlinear-resistive-network}, we derive the equations characterizing the steady state of a DRN (Lemma~\ref{lma:deep-resistive-network}). Finally, using Lemma~\ref{lma:deep-resistive-network}, we prove that any ReLU neural network can be approximated by a DRN (Theorem~\ref{thm:drn-approximate-relu-nn} in Section~\ref{sec:universal-approximators}). Combining everything, our main Theorem~\ref{thm:nrn-universal} is a consequence of Theorem~\ref{thm:relu-nn-universal} and Theorem~\ref{thm:drn-approximate-relu-nn}. Proofs of the lemmas and theorems are provided in Appendix~\ref{sec:proofs}.

\subsection{ReLU Neural Network}
\label{sec:deep-neural-network}

A neural network processes an input vector $x$ through several transformation stages to produce an output vector. We consider specifically the multilayer perceptron architecture with the rectified linear unit (ReLU) activation function. We refer to this model as the `ReLU neural network'.

A ReLU neural network consists of $L$ stages of transformation, called `layers'. The input vector is processed sequentially from the input layer (of index $\ell = 0$) to the output layer (of index $\ell = L$), through the hidden layers ($1 \leq \ell \leq L-1$). Each layer $\ell$ has a dimension $M_{\ell}$, also known as the number of `units' of the layer. Denoting by $s_k^{(\ell)}$ the state of the $k$-th unit of layer $\ell$, the equations defining the network's state are as follows:

\begin{enumerate}
\item For the input layer:
\begin{equation}
\label{eq:relu-nn-input}
s_k^{(0)} := x_k, \qquad 1 \leq k \leq M_0,
\end{equation}
where $x=(x_1, \ldots, x_{M_0})$ is the input vector.
\item For the hidden layers:
\begin{equation}
s_k^{(\ell)} := \max \left( 0, \sum_{j=1}^{M_{\ell-1}} w_{jk}^{(\ell)} s_j^{(\ell-1)} + b_k^{(\ell)} \right),
\end{equation}
for every $(\ell,k)$ such that $1 \leq \ell \leq L-1$ and $1 \leq k \leq M_\ell$.
\item For the output layer:
\begin{equation}
\label{eq:relu-nn-output}
s_k^{(L)} := \sum_{j=1}^{M_{L-1}} w_{jk}^{(L)} s_j^{(L-1)} + b_k^{(L)}, \qquad 1 \leq k \leq M_L.
\end{equation}
\end{enumerate}

In these equations, the units’ transformations are parameterized by the `weights' $w_{jk}^{(\ell)}$ and `biases' $b_k^{(\ell)}$. The $\max(0, \cdot)$ function is called the `ReLU' nonlinear activation function \citep{glorot2011deep}. The network thus implements a function $G : \mathbb{R}^{M_0} \to \mathbb{R}^{M_L}$ mapping input $x$ to output $s^{(L)}$.

\begin{thm}[ReLU neural networks are universal function approximators]
\label{thm:relu-nn-universal}
ReLU neural networks can approximate any continuous function on a compact subset. Specifically, for any continuous function $f : \mathbb{R}^q \to \mathbb{R}^r$, compact subset $C \subset \mathbb{R}^q$, and $\epsilon > 0$, there exists a ReLU neural network with $L=2$ layers, $M_0=q$ input units, and $M_2=r$ output units such that:
\begin{equation}
\| G(x) - f(x) \| \leq \epsilon, \qquad \forall x \in C,
\end{equation}
where $G$ is the function implemented by the neural network.
\end{thm}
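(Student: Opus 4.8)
The plan is to reduce the vector-valued, multivariate statement to a sequence of standard one-dimensional facts, and to identify the class of functions computable by a two-layer ReLU network as exactly the class that closes the gap. Writing $\sigma(t) = \max(0,t)$, a network as in \eqref{eq:relu-nn-input}--\eqref{eq:relu-nn-output} with $L=2$ computes, in each output coordinate, a function of the form $G_k(x) = \sum_{j} c_{jk}\,\sigma(a_j \cdot x + b_j) + d_k$, i.e.\ a finite sum of \emph{ridge functions} passed through a ReLU, plus a constant. I would first dispose of the vector-valued case: approximating each scalar component $f_k$ to accuracy $\epsilon'$ and stacking the resulting hidden units into a single hidden layer (with the output weights for coordinate $k$ supported only on the units built for $f_k$) yields one two-layer network whose total error is controlled by choosing $\epsilon'$ small relative to $\epsilon$ and $r$. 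Hence it suffices to treat a scalar target $f : \mathbb{R}^q \to \mathbb{R}$.

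For the scalar case I would proceed in three reductions. First, by the Stone--Weierstrass theorem, choose a polynomial $p$ with $\sup_{x\in C}|f(x)-p(x)| \le \epsilon/2$. Second, decompose $p$ into ridge form: using the classical fact that the $n$-th powers of linear forms span the homogeneous polynomials of degree $n$, write $p(x) = \sum_{j=1}^{N} q_j(a_j \cdot x)$ for finitely many directions $a_j$ and univariate polynomials $q_j$. Third, for each $j$ the set $I_j = \{a_j\cdot x : x \in C\}$ is a compact interval on which the continuous function $q_j$ is uniformly approximated by a continuous piecewise-linear function $\tilde q_j$, and every univariate piecewise-linear function on an interval is exactly $\alpha t + \beta + \sum_m \gamma_m\,\sigma(t - \tau_m)$. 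Substituting $t = a_j \cdot x$ turns each $\tilde q_j$ into a sum of ReLU ridge functions in $x$, with the stray linear term $\alpha(a_j\cdot x)$ rewritten as $\sigma(a_j\cdot x) - \sigma(-a_j\cdot x)$ so that it too is realized by hidden units, while the constants collect into the output bias. Collecting all such units across $j$ gives a single hidden layer, and choosing the errors $|q_j - \tilde q_j|$ on $I_j$ small enough that they sum to at most $\epsilon/2$ yields $\sup_{x\in C}|f(x)-G(x)| \le \epsilon$.

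I expect the main obstacle to be the second reduction: the passage from an arbitrary multivariate polynomial to a \emph{finite} sum of univariate ridge functions. This is precisely where a single hidden layer is delicate, since in dimension $q \ge 2$ not every continuous piecewise-linear function is representable by one hidden layer, and one cannot build high-dimensional ``bumps'' as in a partition-of-unity argument. Routing through polynomials and the spanning property of powers of linear forms is what circumvents this; the remaining work is the bookkeeping of the domains $I_j$ and the propagation of the three error contributions.

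A cleaner, non-constructive alternative is simply to invoke the classical universal approximation theorem for single-hidden-layer networks \citep{leshno1993multilayer}, which guarantees density in $C(C)$ precisely when the activation is continuous and not a polynomial; since $\sigma = \max(0,\cdot)$ is continuous and non-polynomial, the conclusion is immediate, and quantitative versions \citep{yarotsky2017error} give the statement as phrased. I would likely present this short invocation as the proof and relegate the explicit ridge-function construction to a remark, since the genuine novelty of the paper lies downstream in the resistive-network translation rather than in this standard lemma.
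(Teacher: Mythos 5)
Your proposal is correct, and your preferred presentation coincides with what the paper actually does: the paper gives no proof of Theorem~\ref{thm:relu-nn-universal} at all, stating in Appendix~\ref{sec:proofs} that the result is well known and deferring to \cite{leshno1993multilayer} (with \cite{yarotsky2017error} as a secondary reference) --- exactly your ``cleaner, non-constructive alternative.'' Your constructive sketch, which the paper does not contain, is also sound as an independent proof: the reduction to scalar targets by stacking hidden units is standard; Stone--Weierstrass gives a polynomial within $\epsilon/2$; the fact that $n$-th powers of linear forms span the homogeneous polynomials of degree $n$ (a finite-dimensional polarization argument) yields the finite ridge decomposition $p(x)=\sum_j q_j(a_j\cdot x)$; and on each compact interval $I_j$ the univariate $q_j$ is uniformly approximated by a continuous piecewise-linear function, which is exactly representable as $\alpha t+\beta+\sum_m \gamma_m\,\sigma(t-\tau_m)$, with the linear term realized as $\sigma(t)-\sigma(-t)$ and the constants absorbed into the output bias $b_k^{(L)}$ that the paper's architecture (Eq.~\eqref{eq:relu-nn-output}) indeed provides. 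You are also right to flag the ridge decomposition as the crux: single-hidden-layer ReLU networks compute only sums of ridge functions, whose distributional second derivatives concentrate on full hyperplanes, so for $q\geq 2$ they cannot represent every continuous piecewise-linear function, and a naive partition-of-unity or triangulation argument would fail at depth $L=2$; detouring through polynomials circumvents this. The trade-off between the two routes is as you describe: the explicit construction buys self-containedness and concrete parameters, while the citation buys brevity and the sharper general statement of \cite{leshno1993multilayer} (density for a single hidden layer holds precisely when the activation is continuous and non-polynomial), which is all the downstream resistive-network argument requires.
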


This well-known result was first proved by \cite{leshno1993multilayer} (see also \cite{yarotsky2017error}). We will use this result to prove our Theorem~\ref{thm:nrn-universal}.

\subsection{Deep Resistive Network}
\label{sec:deep-resistive-network}

Next, we present a nonlinear resistive network model similar to that of Kendall \emph{et al.}  \citep{kendall2020training} which takes inspiration from the layered architecture of neural networks. We call this a `deep resistive network'. Our goal will be to prove that DRNs can approximate ReLU neural networks to arbitrary accuracy.

In a DRN, the circuit elements - voltage sources, resistors and diodes - are assembled to form a layered network. A DRN is depicted in Figure \ref{fig:deep-resistive-network} and defined as follows. First of all, we choose a reference node called `ground'. We denote by $L$ the number of layers in the DRN, and for each $\ell$ such that $0 \leq \ell \leq L$, we denote by $N_\ell$ the number of nodes in layer $\ell$.
We choose $N_0=2q+2$ input nodes and $N_L=r$ output nodes.
We also call each node a `unit' by analogy with the units of a neural network. We denote by $v_k^{(\ell)}$ the electrical potential of the $k$-th node of layer $\ell$, which we may think of as the unit's activation. Pairs of nodes from two consecutive layers are interconnected by (variable) resistors (i.e. the `trainable weights'). We denote by $g_{jk}^{(\ell)}$ the conductance of the resistor between the $j$-th node of layer $\ell-1$ and the $k$-th node of layer $\ell$. The layer of index $\ell=0$ is the `input layer', whose units are connected to ground by VCVSs with the same gain $A^{(0)} > 0$. Moreover, there are $q$ voltage sources that play the role of input variables and serve as input voltages to the $2q$ VCVSs. Given an input vector $x=(x_1,\ldots,x_q)$, we set the $q$ input voltage sources to $x_1,\ldots,x_q$, so that the input nodes (VCVS output voltages) have electrical potentials
\begin{equation}
v_{2k}^{(0)} := +A^{(0)} x_k, \qquad v_{2k+1}^{(0)} := - A^{(0)} x_k, \qquad 1 \leq k \leq q.
\end{equation}
The units of intermediate (`hidden') layers ($1 \leq \ell \leq L-1$) of the network are nonlinear: for each hidden unit, a diode is placed between the unit's node and ground, which can be oriented in either of the two directions. For the units of even index $k$, the diode points from the unit's node to ground, so the unit's electrical potential is non-negative: when it reaches $0$, current flows through the diode from ground to the unit's node so as to maintain the unit's electrical potential at zero ; we call such a unit an \textit{excitatory unit}. Conversely, for the units of odd index $k$, the diode points from ground to the unit's node, so the unit's electrical potential is non-positive: when it reaches $0$, the extra current sinks to ground through the diode ; we call this an \textit{inhibitory unit}. Thus, half of the hidden units are excitatory and the other half are inhibitory. Output units are linear, meaning that no diode is used for the output units. Finally, each hidden and output unit $v_{k}^{(\ell)}$ is also linked to ground by a resistor whose conductance is denoted by $g_{k}^{(\ell)}$. In addition, each of the input and hidden layers contains a pair of `bias units' that are connected to ground by voltage sources of fixed voltages:
\begin{equation}
v_{0}^{(\ell)}:=+A^{(\ell)}, \qquad v_{1}^{(\ell)}:=-A^{(\ell)}, \qquad 0 \leq \ell \leq L-1,
\end{equation}
where $A^{(\ell)} > 0$ a layerwise scalar -- the `gain' of layer $\ell$.

Under the assumptions of ideality of the elements (Section~\ref{sec:nonlinear-resistive-network}), the steady state of the DRN - the configuration of branch voltages and branch currents that satisfies all the branch equations, as well as KCL and KVL - is determined by the following set of equations.

\begin{restatable}[Equations of a DRN]{lma}{equationsdrn}
\label{lma:deep-resistive-network}
The electrical potentials of hidden units satisfy:
\begin{equation}
\label{eq:hidden-drn}
v_k^{(\ell)} = 
\left\{
\begin{array}{l}
\displaystyle \max \left( 0, p_k^{(\ell)} \right) \quad \text{if k is even (excitatory unit)}, \\
\displaystyle \min \left( 0, p_k^{(\ell)} \right) \quad \text{if k is odd (inhibitory unit)},
\end{array}
\right.
\end{equation}
where
\begin{equation}
p_k^{(\ell)} = \frac{\sum_{j=0}^{N_{\ell-1}-1} g_{jk}^{(\ell)} v_j^{(\ell-1)} + \sum_{j=2}^{N_{\ell+1}-1} g_{kj}^{(\ell+1)} v_j^{(\ell+1)}}{g_k^{(\ell)} + \sum_{j=0}^{N_{\ell-1}-1} g_{jk}^{(\ell)} + \sum_{j=2}^{N_{\ell+1}-1} g_{kj}^{(\ell+1)}}.
\end{equation}
As for the output units' electrical potentials,
\begin{equation}
v_k^{(L)} = 
\frac{\sum_{j=0}^{N_{L-1}-1} g_{jk}^{(L)} v_j^{(L-1)}}{g_k^{(L)} + \sum_{j=0}^{N_{\ell-1}} g_{jk}^{(L)}}. 
\end{equation}
\end{restatable}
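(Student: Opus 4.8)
The plan is to derive the stated equations by applying Kirchhoff's current law (KCL) at each node together with the ideal branch characteristics of Section~\ref{sec:nonlinear-resistive-network}; no global solve is required, since the lemma merely characterizes the steady state by a local balance condition holding simultaneously at every node. I would begin by fixing an arbitrary hidden node $v_k^{(\ell)}$ (with $1 \leq \ell \leq L-1$) and enumerating the branches incident to it: the ``backward'' resistors of conductance $g_{jk}^{(\ell)}$ joining it to each node $j$ of layer $\ell-1$; the ``forward'' resistors of conductance $g_{kj}^{(\ell+1)}$ joining it to each non-bias node $j$ of layer $\ell+1$ (the bias nodes $j \in \{0,1\}$ of the next layer are driven by voltage sources and carry no resistor back to layer $\ell$, which is why the forward sum begins at $j=2$); the resistor of conductance $g_k^{(\ell)}$ to ground; and the diode to ground. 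Writing each resistor current via Ohm's law and letting $i_k^{(\ell)}$ denote the diode current into the node, KCL (sum of currents into the node equals zero) reads
\begin{equation*}
\sum_{j=0}^{N_{\ell-1}-1} g_{jk}^{(\ell)} \left( v_j^{(\ell-1)} - v_k^{(\ell)} \right) + \sum_{j=2}^{N_{\ell+1}-1} g_{kj}^{(\ell+1)} \left( v_j^{(\ell+1)} - v_k^{(\ell)} \right) - g_k^{(\ell)} v_k^{(\ell)} + i_k^{(\ell)} = 0.
\end{equation*}

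Collecting the coefficient of $v_k^{(\ell)}$ yields the total conductance $D_k^{(\ell)} \defeq g_k^{(\ell)} + \sum_{j=0}^{N_{\ell-1}-1} g_{jk}^{(\ell)} + \sum_{j=2}^{N_{\ell+1}-1} g_{kj}^{(\ell+1)}$ seen at the node, so that solving for $v_k^{(\ell)}$ gives $v_k^{(\ell)} = p_k^{(\ell)} + i_k^{(\ell)}/D_k^{(\ell)}$, where $p_k^{(\ell)}$ is exactly the weighted average of neighboring potentials appearing in the statement of Lemma~\ref{lma:deep-resistive-network}. It remains to eliminate the unknown diode current $i_k^{(\ell)}$ using the ideal diode characteristic (Figure~\ref{fig:circuit-elements}). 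The central step is to translate that piecewise characteristic into a complementarity condition: for an excitatory unit ($k$ even) the diode pins the potential to the half-line $v_k^{(\ell)} \geq 0$ and admits only a nonnegative injected current, i.e. $v_k^{(\ell)} \geq 0$, $i_k^{(\ell)} \geq 0$, and $v_k^{(\ell)} \, i_k^{(\ell)} = 0$. A two-case analysis on the sign of $p_k^{(\ell)}$ then closes the argument: if $p_k^{(\ell)} \geq 0$ the diode is blocking ($i_k^{(\ell)} = 0$) and $v_k^{(\ell)} = p_k^{(\ell)}$; if $p_k^{(\ell)} < 0$ the diode conducts exactly the current needed to hold $v_k^{(\ell)} = 0$. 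Both cases collapse to $v_k^{(\ell)} = \max(0, p_k^{(\ell)})$, and the mirror-image argument for an inhibitory unit ($k$ odd), where the complementarity becomes $v_k^{(\ell)} \leq 0$, $i_k^{(\ell)} \leq 0$, $v_k^{(\ell)} \, i_k^{(\ell)} = 0$, yields $v_k^{(\ell)} = \min(0, p_k^{(\ell)})$.

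For an output node $v_k^{(L)}$ the same KCL bookkeeping applies, but with two simplifications: there is no subsequent layer (so the forward sum is absent) and no diode (so $i_k^{(L)} = 0$). Solving the resulting balance for $v_k^{(L)}$ gives the stated output formula directly. I would also remark that KCL at the input-layer nodes, the bias nodes, and the VCVS output terminals need not be imposed separately: these potentials are fixed by the voltage sources and VCVSs, which supply whatever current KCL demands, so those node equations are automatically consistent and merely contribute the boundary data $v_j^{(\ell-1)}$ to the sums above.

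I expect the main obstacle to be the careful, sign-consistent translation of the ideal diode's two-regime $i$-$v$ law into the complementarity condition for both orientations, together with the attendant bookkeeping of which branches are incident to each node — in particular recognizing that the bias nodes of layer $\ell+1$ do not couple back to layer $\ell$, which is precisely what fixes the lower limit $j=2$ in the forward sum. Once the diode relation is phrased as complementarity, the remainder is the routine linear algebra of a single KCL equation per node.
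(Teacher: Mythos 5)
Your proposal is correct and follows essentially the same route as the paper's proof: Kirchhoff's current law at each node plus Ohm's law yields the master equation $v_k^{(\ell)} = p_k^{(\ell)} + i_k^{(\ell)}/D_k^{(\ell)}$, and the ideal diode law then collapses the two regimes into $\max(0,\cdot)$ or $\min(0,\cdot)$, with the output layer handled by dropping the diode and the forward sum. The only cosmetic difference is that you phrase the diode law as a complementarity condition and case on the sign of $p_k^{(\ell)}$, whereas the paper cases on the diode's on/off state; the two case analyses are equivalent.
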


We prove Lemma~\ref{lma:deep-resistive-network} in Appendix~\ref{sec:proof:deep-resistive-network}. A few remarks are in order. First, the equations characterizing the steady state of a DRN share similarities with those of a ReLU neural network (Eq.~\eqref{eq:relu-nn-input}-\eqref{eq:relu-nn-output}): these equations involve a weighted sum of the neighboring units' activities (electrical potentials), and the function $\max(0,\cdot)$ is the `ReLU' nonlinear activation function.

However, the equations of a DRN and a ReLU neural network are not identical: a DRN is not a ReLU neural network. One difference is that, in a DRN, the units of a given layer $\ell$ receive signals not just from the previous layer $\ell-1$ but also from the next layer $\ell+1$ ; in other words, signals flow in both the forward and backward directions. 

Another difference is that these signals are normalized by the sum of the weights (total conductance) $g_k^{(\ell)} + \sum_{j=0}^{N_{\ell-1}-1} g_{jk}^{(\ell)} + \sum_{j=2}^{N_{\ell+1}-1} g_{kj}^{(\ell+1)}$. This results in the following \emph{layerwise maximum principle}, summarized in the following lemma.

\begin{restatable}[A maximum principle for DRNs]{lma}{maximumprinciple}
\label{lma:maximum-principle}
For every $\ell$ such that $0 \leq \ell \leq L$, denote the maximum amplitude of the node electrical potentials in layer $\ell$ by
\begin{equation}
v_{\rm max}^{(\ell)} := \max_{2 \leq k \leq N_\ell-1} \left| v_k^{(\ell)} \right|.
\end{equation}
Then we have
\begin{equation}
\label{eq:to-prove}
v_{\rm max}^{(\ell)} \leq \, \max \left( v_{\rm max}^{(\ell-1)}, A^{(\ell-1)}, \ldots, A^{(L)} \right), \qquad 1 \leq \ell \leq L.
\end{equation}
In particular, the electrical potentials of hidden and output units are bounded by those of input and bias units, that is, for every $k$ and $\ell$,
\begin{equation}
\left| v_k^{(\ell)} \right| \leq \max \left( \max_{2 \leq j \leq N_0-1} \left| v_j^{(0)} \right|, \; \max_{0 \leq \ell \leq L-1} \left| A^{(\ell)} \right| \right).
\end{equation}
\end{restatable}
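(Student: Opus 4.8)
\emph{Proof proposal.} The plan is to read off from Lemma~\ref{lma:deep-resistive-network} that every hidden- and output-unit potential is a \emph{weighted average} of its neighbours, and then to close the argument with a maximum-principle (extremal) estimate. Inspecting the numerator and denominator of $p_k^{(\ell)}$ shows that $p_k^{(\ell)}$ is a convex combination of the previous-layer potentials $v_j^{(\ell-1)}$ (for $0\le j\le N_{\ell-1}-1$, which includes the bias nodes at $\pm A^{(\ell-1)}$), the next-layer potentials $v_j^{(\ell+1)}$ (for $2\le j\le N_{\ell+1}-1$), and the ground potential $0$, the last of these carried with the weight $g_k^{(\ell)}$ of the resistor to ground. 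Since $\max(0,\cdot)$ and $\min(0,\cdot)$ never increase absolute value, $|v_k^{(\ell)}|\le|p_k^{(\ell)}|$, and writing $W$ for the sum of the neighbour conductances I would record
\begin{equation}
\bigl|v_k^{(\ell)}\bigr| \;\le\; \frac{W}{W+g_k^{(\ell)}}\,\max\bigl(\text{neighbour amplitudes}\bigr).
\end{equation}
Because $g_k^{(\ell)}>0$, this is a \emph{strict} contraction whenever the neighbour amplitudes are not all zero; the same statement holds, without the clamp, for the output units, whose only neighbours are layer $L-1$ and ground.

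The first thing to isolate is the obstacle: in a DRN signals travel both forward and backward, so $v_k^{(\ell)}$ depends on layer $\ell+1$ as well as on layer $\ell-1$, and a naive induction in either direction becomes circular (bounding $v^{(\ell)}$ through $v^{(\ell+1)}$ and then $v^{(\ell+1)}$ back through $v^{(\ell)}$ is vacuous). I would break this circle by applying the extremal estimate to the contiguous \emph{upper block} of layers $\{\ell,\ldots,L\}$. Set $T_\ell:=\max_{\ell\le m\le L} v_{\rm max}^{(m)}$ and suppose, for contradiction, that $T_\ell>\max\bigl(v_{\rm max}^{(\ell-1)},A^{(\ell-1)},\ldots,A^{(L-1)}\bigr)$, so in particular $T_\ell>0$. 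Choose a unit attaining $T_\ell$, at some layer $m^\ast\in\{\ell,\ldots,L\}$. The key observation is that this block has a single ``open'' boundary: the forward neighbour $m^\ast+1$ (when $m^\ast<L$) still lies inside the block, the output layer $L$ has no forward neighbour at all, and so the maximum can only leak out through the backward neighbour at layer $\ell-1$ (possible only when $m^\ast=\ell$) and through the bias potentials $A^{(m^\ast-1)}$ with $\ell-1\le m^\ast-1\le L-1$. Under the standing hypothesis all of these are strictly below $T_\ell$, whereas every in-block neighbour has amplitude $\le T_\ell$; the strict contraction then yields $T_\ell<T_\ell$, a contradiction. Hence
\begin{equation}
v_{\rm max}^{(\ell)} \;\le\; T_\ell \;\le\; \max\bigl(v_{\rm max}^{(\ell-1)},A^{(\ell-1)},\ldots,A^{(L-1)}\bigr),
\end{equation}
which is the claimed inequality~\eqref{eq:to-prove} (the output layer carries no bias, so I read the final index in the stated bound as $A^{(L-1)}$).

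Finally, the ``in particular'' bound follows by telescoping this layerwise estimate down to the input layer: substituting the bound for $v_{\rm max}^{(\ell-1)}$ into that for $v_{\rm max}^{(\ell)}$ and iterating down to layer $0$ collapses the right-hand side to $\max\bigl(v_{\rm max}^{(0)},A^{(0)},\ldots,A^{(L-1)}\bigr)$, i.e.\ the maximum of the input-layer (VCVS) potentials and the bias gains. I expect the difficulty to be conceptual rather than computational: the real work is recognizing the weighted-average structure and, above all, taming the two-way coupling, for which the strict positivity of the grounding conductances $g_k^{(\ell)}$ is exactly what prevents a maximum from being sustained in the interior of the network. The only bookkeeping that needs care is the index ranges (which neighbours carry bias nodes and which do not) and the edge cases $m^\ast=\ell$, $m^\ast=L$, and $\ell=L$, all of which the block argument handles uniformly.
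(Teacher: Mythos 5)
Your reading of Lemma~\ref{lma:deep-resistive-network} as a convex-combination structure, your identification of the two-way coupling as the central obstacle, and your reading of the final index in \eqref{eq:to-prove} as $A^{(L-1)}$ all match the paper; but the mechanism you use to close the argument has a genuine gap. The entire block-contradiction step rests on the strict contraction $\bigl|v_k^{(\ell)}\bigr| \le \frac{W}{W+g_k^{(\ell)}}\max(\text{neighbour amplitudes})$ with $g_k^{(\ell)}>0$, and strict positivity of the leak conductances is not available. The DRN definition only guarantees $g_k^{(\ell)} \ge 0$, and -- decisively -- in the construction of Section~\ref{sec:universal-approximators}, which is exactly where Lemma~\ref{lma:maximum-principle} is invoked (to bound the backward term in the proof of Theorem~\ref{thm:drn-approximate-relu-nn}), the leak conductances are $g_{2k}^{(\ell)} = \gamma^\ell \bigl( a^{(\ell)} - \sum_{j} |w_{jk}^{(\ell)}| \bigr)$, which vanish for every unit $k$ attaining the maximum defining $a^{(\ell)}$; so at least one unit per layer has $g_k^{(\ell)} = 0$ by construction (and the bias conductances $g_{0k}^{(\ell)}, g_{1k}^{(\ell)}$ can vanish too, e.g.\ when $b_k^{(\ell)} \le 0$). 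For such a unit your estimate degenerates to $|v_k^{(\ell)}| \le \max(\text{neighbour amplitudes})$: a convex combination can place all of its mass on in-block neighbours of amplitude exactly $T_\ell$, the strict inequality $T_\ell < T_\ell$ never materializes, and the extremal argument stalls at equality. Repairing it would require a strong-maximum-principle propagation step (if an interior unit attains $T_\ell$, all its positively-weighted neighbours attain it, and the equality set must reach the boundary), plus connectivity/non-degeneracy hypotheses that you have not stated and that do not obviously hold for arbitrary weight matrices.

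The paper's proof avoids this issue entirely, and the contrast shows what is missing. It proceeds by backward induction from $\ell = L$ down to $\ell = 1$ using only non-strict inequalities: $g_k^{(\ell)}$ is simply dropped from the denominator (only $g_k^{(\ell)} \ge 0$ is used), and the feedback from layer $\ell+1$ is tamed by a case split -- either $v_{\rm max}^{(\ell)} \le \max\bigl(A^{(\ell)},\ldots,A^{(L-1)}\bigr)$, in which case the claim is immediate, or the inductive hypothesis yields $v_{\rm max}^{(\ell+1)} \le v_{\rm max}^{(\ell)}$, in which case the term $\sum_j g_{kj}^{(\ell+1)} v_{\rm max}^{(\ell)}$ is moved to the left-hand side and \emph{cancelled}, leaving a weighted mean of $A^{(\ell-1)}$ and $v_{\rm max}^{(\ell-1)}$ alone. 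This algebraic cancellation is the idea your proposal lacks: it converts the two-way coupling into a purely backward estimate without requiring a strict drop at any single unit, and hence remains valid when leak and bias conductances vanish.
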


We prove Lemma~\ref{lma:maximum-principle} in Appendix~\ref{sec:proof:lma:maximum-principle}. As a consequence of this lemma, the signal strengths tend to decrease as the index of the layer ($\ell$) increases. This is the reason why bias units and VCVSs are used: the VCVSs amplify the input signals by a gain $A^{(0)}$ to compensate for the decay in amplitude and ensure that the signals at the output layer do not vanish.

Another difference between DRNs and neural networks is that the `weights' of a DRN are non-negative - a conductance is non-negative - meaning $g_{jk}^{(\ell)} \geq 0$. This is the reason why, in addition to excitatory units, we also use inhibitory units, so the network can communicate `negative signals' too. This is also the reason why we have doubled the number of input VCVSs in the input layer (two VCVSs for each input voltage source).

\begin{figure*}[ht!]
\begin{center}
\fbox{
\includegraphics[width=0.9\textwidth]{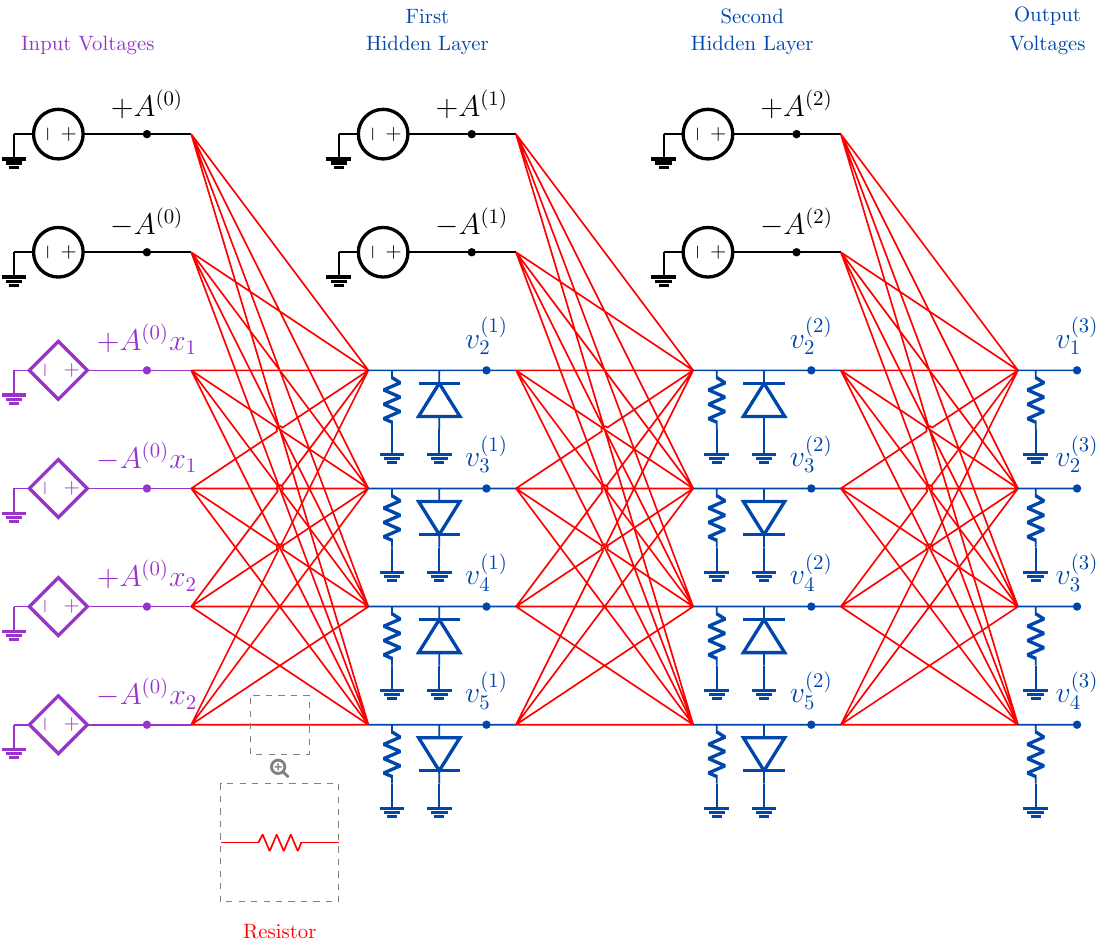}
}

\vspace{0.5cm}

\includegraphics[width=0.9\textwidth]{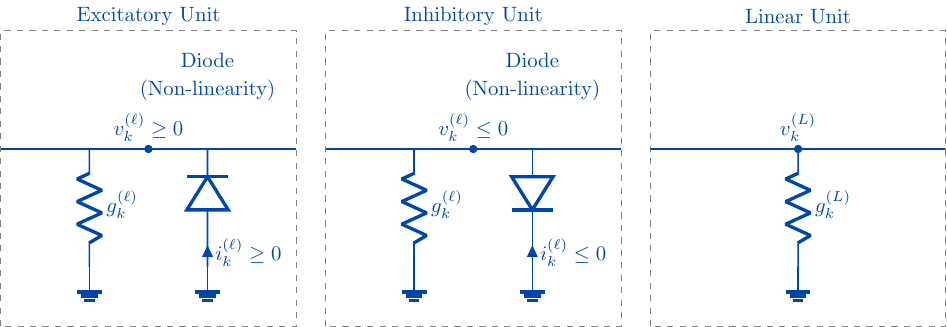}
\end{center}
\caption{
\textbf{A deep resistive network (DRN).} 
\textbf{(Top)} A DRN consists of `units' (excitatory, inhibitory or linear, shown in blue) interconnected by variable resistors (in red). Input voltages ($x_1$ and $x_2$) are applied to the network via voltage sources (not shown in the figure for readability) and are amplified using VCVSs to generate the electrical potentials $+A^{(0)}x_1$, $-A^{(0)}x_1$, $+A^{(0)}x_2$, and $-A^{(0)}x_2$ at designated `input nodes' (in purple). The model's prediction is read from a set of output nodes ($v_1^{(3)}$, $v_2^{(3)}$, $v_3^{(3)}$, and $v_4^{(3)}$). Additionally, each layer includes two voltage sources serving as `biases' (shown in black). \textbf{(Bottom)} Each unit is connected to ground through a resistor. Nonlinear units are constructed by placing a diode between the unit's node and ground. Depending on the orientation of the diode, nonlinear units can be either `excitatory' or `inhibitory'. Units without a diode are called `linear' units.
}
\label{fig:deep-resistive-network}
\end{figure*}

\subsection{Approximating a ReLU Neural Network With a DRN}
\label{sec:universal-approximators}

Given a ReLU neural network, we show that we can build a DRN, parametrized by a number $\gamma>0$, that approximates the neural network up to $O(\gamma)$. To this end, let $L$ be the number of layers of the neural network, and $M_\ell$ the number of units in layer $\ell$, for $0 \leq \ell \leq L$. We denote by $w_{jk}^{(\ell)}$ and $b_k^{(\ell)}$ the weights and biases. We also denote by $s_k^{(\ell)}$ the state of the $k$-th unit in layer $\ell$ of the neural network, given an input vector $x=(x_1, \ldots, x_{M_0})$.

We build the DRN as follows. First we define its architecture. The DRN consists of $L$ layers. Input and hidden layers ($0 \leq \ell \leq L-1$) each contain $N_\ell := 2 M_{\ell} + 2$ units, including $M_{\ell}$ excitatory units, $M_{\ell}$ inhibitory units and two bias units. The output layer contains $N_L := M_L$ linear units. To fully determine the DRN, it remains to choose the values of the conductances ($g_{jk}^{(\ell)}$), as well as the amplification factors ($A^{(\ell)}$) for input and hidden layers. The key idea of our construction is that for each hidden unit $s_k^{(\ell)}$ of the ReLU neural network, we associate two DRN units $v_{2k}^{(\ell)}$ and $v_{2k+1}^{(\ell)}$, choosing the conductances and amplification factors in such a way that $v_{2k}^{(\ell)} = - v_{2k+1}^{(\ell)} \approx A^{(\ell)} s_k^{(\ell)}$.

To do that, we denote the biases of the neural network for convenience by $w_{0k}^{(\ell)} := b_k^{(\ell)}$. We also introduce a dimensionless number $\gamma > 0$ ; later we will let $\gamma \to 0$ to establish the equivalence of the DRN and the ReLU neural network. For every $\ell$ such that $1 \leq \ell \leq L-1$, given pairs of units $(v_{2j}^{(\ell-1)}, v_{2j+1}^{(\ell-1)})$ and $(v_{2k}^{(\ell)}, v_{2k+1}^{(\ell)})$, we define the conductances of the resistors interconnecting them as follows:
\begin{align}
g_{2j,2k}^{(\ell)} & := \max \left( 0,\gamma^\ell w_{jk}^{(\ell)} \right), \\
g_{2j,2k+1}^{(\ell)} & := \max \left( 0,-\gamma^\ell w_{jk}^{(\ell)} \right), \\
g_{2j+1,2k}^{(\ell)} & := \max \left( 0,-\gamma^\ell w_{jk}^{(\ell)} \right), \\
g_{2j+1,2k+1}^{(\ell)} & := \max \left( 0,\gamma^\ell w_{jk}^{(\ell)} \right).
\end{align}
Similarly for the output layer, for every $(j,k)$ such that $0 \leq j \leq M_{L-1}-1$ and $0 \leq k \leq M_L-1$, we define 
\begin{align}
g_{2j,k}^{(L)} & := \max \left( 0,\gamma^L w_{jk}^{(L)} \right), \\
g_{2j+1,k}^{(L)} & := \max \left( 0,-\gamma^L w_{jk}^{(L)} \right).
\end{align}
The conductances thus defined are all non-negative. The purpose of the parameter $\gamma \ll 1$ is to ensure that, at each hidden layer $\ell$, the signals coming from the next layer ($\ell+1$), which scale approximately as $\gamma^{\ell+1}$, are negligible compared to the signals coming from the previous layer ($\ell-1$), which scale approximately as $\gamma^{\ell}$. This guarantees that the network effectively operates in a feedforward manner.

It remains to define the layerwise amplification factors and the unitwise leak conductances. For every $\ell$ such that $1 \leq \ell \leq L$, we define
\begin{equation}
\label{eq:layer-gain}
a^{(\ell)} := \max_{1 \leq k \leq M_{\ell}} \left( \sum_{j=0}^{M_{\ell-1}} |w_{jk}^{(\ell)}| \right),
\end{equation}
and for every $(\ell,k)$ such that $1 \leq \ell \leq L$ and $1 \leq k \leq M_\ell$, we define the `leak conductances' of units $v_{2k}^{(\ell)}$ and $v_{2k+1}^{(\ell)}$ as
\begin{equation}
g_{2k}^{(\ell)} := g_{2k+1}^{(\ell)} := \gamma^\ell \left( a^{(\ell)} - \sum_{j=0}^{M_{\ell-1}} |w_{jk}^{(\ell)}| \right).
\end{equation}
The conductances $g_{2k}^{(\ell)}$ and $g_{2k+1}^{(\ell)}$ are also non-negative, by definition of $a^{(\ell)}$. Finally, we define the layerwise amplification factors (for input and hidden layers) as
\begin{equation}
A^{(\ell)} := a^{(\ell+1)} \times a^{(\ell+2)} \times \cdots \times a^{(L)}, \qquad 0 \leq \ell \leq L-1.
\end{equation}

\begin{restatable}[Approximation of a ReLU neural network with a DRN]{thm}{approximationrelunndrn}
\label{thm:drn-approximate-relu-nn}
Suppose that, given an input vector $x=(x_1,x_2,\ldots,x_{M_0})$, we set the input voltage sources to
\begin{equation}
\label{eq:thm-initial-condition}
v_{2k}^{(0)} := A^{(0)} x_k, \qquad v_{2k+1}^{(0)} := - A^{(0)} x_k, \qquad 1 \leq k \leq M_0.
\end{equation}
Denote $\|x\|_{\infty} := \max_{1 \leq k \leq M_0} |x_k|$. Then, in the limit $\gamma \to 0$, for every $(\ell,k)$ such that $1 \leq \ell \leq L-1$ and $1 \leq k \leq M_\ell$, we have
\begin{gather}
\label{eq:thm-approx}
v_{2k}^{(\ell)} = A^{(\ell)} s_k^{(\ell)} + O(\gamma \|x\|_{\infty}), \\
v_{2k+1}^{(\ell)} = - A^{(\ell)} s_k^{(\ell)} + O(\gamma \|x\|_{\infty}).
\end{gather}
For the output layer, we have
\begin{equation}
v_{k}^{(L)} = s_k^{(L)} + O(\gamma \|x\|_{\infty}), \quad 1 \leq k \leq M_L.
\end{equation}
\end{restatable}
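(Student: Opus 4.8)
The plan is to prove the three displayed estimates by induction on the layer index $\ell$, using the steady-state equations of Lemma~\ref{lma:deep-resistive-network} together with the a priori bounds of Lemma~\ref{lma:maximum-principle}. Throughout I would write $z_k^{(\ell)} := \sum_{j=0}^{M_{\ell-1}} w_{jk}^{(\ell)} s_j^{(\ell-1)}$ for the preactivation of the neural network, with the convention $s_0^{(\ell-1)}=1$ so that the bias $w_{0k}^{(\ell)}=b_k^{(\ell)}$ is included, so that $s_k^{(\ell)} = \max(0, z_k^{(\ell)})$ in the hidden layers and $s_k^{(L)} = z_k^{(L)}$ at the output. The induction hypothesis at layer $\ell-1$ is precisely the antisymmetric pair $v_{2j}^{(\ell-1)} = A^{(\ell-1)} s_j^{(\ell-1)} + O(\gamma\|x\|_\infty)$ and $v_{2j+1}^{(\ell-1)} = -A^{(\ell-1)} s_j^{(\ell-1)} + O(\gamma\|x\|_\infty)$, which holds exactly (zero error) in the base case $\ell=0$ by the input condition \eqref{eq:thm-initial-condition} and $s_k^{(0)}=x_k$.

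For the inductive step I would evaluate $p_{2k}^{(\ell)}$ from Lemma~\ref{lma:deep-resistive-network} by treating numerator and denominator separately. The numerator splits into a \emph{forward} sum over layer $\ell-1$ and a \emph{backward} sum over layer $\ell+1$. Substituting the conductance definitions and the induction hypothesis into the forward sum, two elementary identities do all the work: $\max(0,a)-\max(0,-a)=a$ shows that the coefficient of $v_{2j}^{(\ell-1)}$ minus that of $v_{2j+1}^{(\ell-1)}$ collapses to $\gamma^\ell w_{jk}^{(\ell)}$, so the forward numerator equals $\gamma^\ell A^{(\ell-1)} z_k^{(\ell)}$ up to an $O(\gamma^{\ell+1}\|x\|_\infty)$ term carried over from the induction error; and $\max(0,a)+\max(0,-a)=|a|$ shows that the two incoming conductances sum to $\gamma^\ell |w_{jk}^{(\ell)}|$, which combines with the leak conductance $g_{2k}^{(\ell)} = \gamma^\ell(a^{(\ell)} - \sum_j |w_{jk}^{(\ell)}|)$ to give the denominator exactly $\gamma^\ell a^{(\ell)}$. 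The backward contributions to both carry an extra factor $\gamma^{\ell+1}$ and are therefore lower order. Dividing and using the identity $A^{(\ell-1)}/a^{(\ell)} = A^{(\ell)}$, which is immediate from the definition of the amplification factors, gives $p_{2k}^{(\ell)} = A^{(\ell)} z_k^{(\ell)} + O(\gamma\|x\|_\infty)$; applying the excitatory rule $v_{2k}^{(\ell)}=\max(0,p_{2k}^{(\ell)})$, together with $A^{(\ell)}>0$ (so that $\max(0,A^{(\ell)}z)=A^{(\ell)}\max(0,z)$) and the $1$-Lipschitz continuity of $\max(0,\cdot)$, yields $v_{2k}^{(\ell)} = A^{(\ell)} s_k^{(\ell)} + O(\gamma\|x\|_\infty)$. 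The inhibitory case is identical after the sign flips that send the forward numerator to $-\gamma^\ell A^{(\ell-1)} z_k^{(\ell)}$, using $\min(0,-a)=-\max(0,a)$. The output layer is the same computation without the ReLU and with $A^{(L-1)}/a^{(L)} = 1$, giving $v_k^{(L)} = s_k^{(L)} + O(\gamma\|x\|_\infty)$.

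The main obstacle, where care is genuinely needed, is the treatment of the backward signals: the value at layer $\ell$ depends through $p_k^{(\ell)}$ on the as-yet-unestimated values at layer $\ell+1$, which appears to make the induction circular. This is resolved by Lemma~\ref{lma:maximum-principle}, which bounds every $|v_j^{(\ell+1)}|$ uniformly by $\max(A^{(0)}\|x\|_\infty, \max_\ell A^{(\ell)})$, a bound independent of $\gamma$, before any precise information about layer $\ell+1$ is available. Since the backward conductances scale as $\gamma^{\ell+1}$ while the effective denominator scales as $\gamma^\ell a^{(\ell)}$, this a priori bound is exactly what certifies that the backward terms contribute only $O(\gamma)$ relative error, i.e.\ that the network operates effectively feedforward as $\gamma\to 0$. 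Two further points require attention: one must ensure $a^{(\ell)}>0$ so that the denominator, after factoring out $\gamma^\ell$, stays bounded away from zero (a unit with all-zero incoming weights is degenerate and can be handled separately); and one must verify that the induction error does not amplify across layers, which holds because the layer-$(\ell-1)$ error entering the forward numerator is multiplied by $O(\gamma^\ell)$ conductances and then divided by the $O(\gamma^\ell)$ denominator, leaving it at the same $O(\gamma\|x\|_\infty)$ order with a constant depending only on the fixed, finitely many network parameters.
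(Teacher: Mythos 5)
Your proposal is correct and follows essentially the same route as the paper's proof: induction on $\ell$ with the base case from \eqref{eq:thm-initial-condition}, separate evaluation of the numerator and denominator of $p_{2k}^{(\ell)}$ via the conductance identities $g_{2j,2k}^{(\ell)}+g_{2j+1,2k}^{(\ell)}=\gamma^{\ell}|w_{jk}^{(\ell)}|$ and the paired-unit cancellation yielding $\gamma^{\ell}A^{(\ell-1)}w_{jk}^{(\ell)}s_j^{(\ell-1)}$, the $\gamma$-independent bound from Lemma~\ref{lma:maximum-principle} to control the backward terms from layer $\ell+1$ (exactly the paper's resolution of the apparent circularity), and the telescoping $A^{(\ell-1)}=a^{(\ell)}A^{(\ell)}$ to conclude. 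Your added remarks on positive homogeneity and $1$-Lipschitz continuity of $\max(0,\cdot)$, the nondegeneracy requirement $a^{(\ell)}>0$, and the non-amplification of the induction error are small technical refinements of the same argument rather than a different approach.
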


See Appendix~\ref{sec:proof:drn-approximate-relu-nn} for a proof. Namely, when $\gamma \to 0$, the states of the excitatory units of the DRN tend to the states of the units of the ReLU neural network rescaled by $A^{(\ell)}$. In particular, the states of output units of the DRN are equal to those of the neural network, up to $O(\gamma \|x\|_{\infty})$. In other words, the function implemented by the DRN approximates the function implemented by the ReLU neural network.

Using a symmetry argument, one can prove that the DRN thus built has the property that $v_{2k+1}^{(\ell)} = - v_{2k}^{(\ell)}$ for every $(\ell,k)$. Although it might seem wasteful and redundant to have two units encode the same piece of information, we recall that doubling the number of units per layer (using excitatory and inhibitory units) is also what allows us to overcome the constraint of non-negative conductances in resistive networks.

Technically, the reason why the DRN behaves essentially like a feedforward (ReLU) network is that we scale the conductances of each layer $\ell$ by $\gamma^\ell$, using $\gamma \ll 1$. While this trick is mathematically convenient to derive Theorem~\ref{thm:drn-approximate-relu-nn}, it would also likely be impractical for deep networks as the range of conductance values will span multiple orders of magnitude, proportional to the number of layers $L$. It is noteworthy, however, that neural networks with a single hidden layer ($L=2$) are also universal approximators (Theorem~\ref{thm:relu-nn-universal}). The DRNs corresponding to such neural networks have a single hidden layer, therefore the range spanned by their conductance values is smaller.

\section{Simulations}
\label{sec:simulations}

In this section, we illustrate the expressivity of nonlinear resistive networks in simulations. We show that DRNs can be trained to achieve comparable performance to ReLU neural networks on the MNIST, Kuzushiji-MNIST and Fashion-MNIST datasets. We train DRNs with one, two and three hidden layers, denoted by DRN-1H, DRN-2H and DRN-3H, respectively. We compare their performances to their corresponding ReLU neural networks. Each DRN has 1568 input units ($2\times28\times28$), 10 output units corresponding to the ten classes of each of the three datasets, and 1024 units per hidden layer. Each ReLU neural network has 784 input units ($28\times28$), 10 output units and 512 units per hidden layer. We use the algorithm of \cite{scellier2024fast} to compute the DRN steady states necessary to extract the weight (conductance) gradients to perform stochastic gradient descent. A full description of the models, algorithms, and hyperparameters used for training is provided in Appendix~\ref{sec:simulation-details}. Table~\ref{table:results} reports the results. Our code to reproduce the results is available at \url{https://github.com/rain-neuromorphics/energy-based-learning}.

\begin{table}[h!]
\caption{Comparison of deep resistive networks (DRNs) and their equivalent-size ReLU neural networks (ReLU NNs). We train DRNs and ReLU NNs with one, two and three hidden layers on MNIST, F-MNIST and K-MNIST. For each experiment, we perform five runs and we report the mean values and std values of the test error rates (in \%). See Appendix~\ref{sec:simulation-details} for the architectural details of the DRN and ReLU NN models, as well as the algorithms used to train them.}
\label{table:results}
\vskip 0.15in
\begin{center}
\begin{small}
\begin{sc}
\begin{tabular}{ccc}
    \toprule
    Dataset & Network Model & Test Error (\%) \\
    \midrule
    & DRN-1H & 1.51 $\pm$ 0.04 \\
    & ReLU NN-1H & 1.67 $\pm$ 0.03 \\
    \cmidrule(r){2-3}
    \multirow{2}{*}{MNIST} & DRN-2H & 1.46 $\pm$ 0.05 \\
    & ReLU NN-2H & 1.34 $\pm$ 0.01 \\
    \cmidrule(r){2-3}
    & DRN-3H & 1.47 $\pm$ 0.08 \\
    & ReLU NN-3H & 1.48 $\pm$ 0.03 \\
    \midrule
    & DRN-1H & 7.67 $\pm$ 0.13 \\
    & ReLU NN-1H & 8.83 $\pm$ 0.11 \\
    \cmidrule(r){2-3}
    Kuzushiji & DRN-2H & 8.27 $\pm$ 0.02 \\
    MNIST & ReLU NN-2H & 8.14 $\pm$ 0.13 \\
    \cmidrule(r){2-3}
    & DRN-3H & 8.39 $\pm$ 0.24 \\
    & ReLU NN-3H & 7.99 $\pm$ 0.23 \\
    \midrule
    & DRN-1H & 10.31 $\pm$ 0.05 \\
    & ReLU NN-1H & 10.20 $\pm$ 0.11 \\
    \cmidrule(r){2-3}
    Fashion & DRN-2H & 9.88 $\pm$ 0.11 \\
    MNIST & ReLU NN-2H & 9.42 $\pm$ 0.22 \\
    \cmidrule(r){2-3}
    & DRN-3H & 9.79 $\pm$ 0.25 \\
    & ReLU NN-3H & 9.55 $\pm$ 0.14 \\
    \bottomrule
\end{tabular}
\end{sc}
\end{small}
\end{center}
\vskip -0.1in
\end{table} 

As discussed earlier, one limiting assumption in our mathematical derivation of Theorem~\ref{thm:drn-approximate-relu-nn} is the requirement to scale the weights (conductances) of layer $\ell$ by a factor $\gamma^\ell$, where $\gamma \ll 1$. Interestingly, however, our simulation results suggest that this scaling is not necessary in practice. In Table~\ref{table:conductance-span}, we report for each DRN model and for each layer $\ell$ the mean and standard deviation of the conductance matrix $g^{(\ell)}$ after training. As one can see, the difference in conductance values across layers are not significant, demonstrating that conductance matrices do not actually need to span over multiple orders of magnitude.

\begin{table}[ht!]
\caption{Conductance values of the DRN models after training (DRN-1H, DRN-2H, and DRN-3H). We report the mean, standard deviation (std), and maximum values of the conductance matrices ($g^{(k)}$) for each of the three DRN models.}
\label{table:conductance-span}
\vskip 0.15in
\begin{center}
\begin{small}
\begin{sc}
\begin{tabular}{cccccc}
\toprule
& & $g^{(1)}$ & $g^{(2)}$ & $g^{(3)}$ & $g^{(4)}$ \\
\midrule
\multirow{3}{*}{DRN-1H} & mean & 0.0077 & 0.0091 & & \\
& std & 0.0102 & 0.0327 & & \\
& max & 0.2069 & 0.3295 & & \\
\midrule
\multirow{3}{*}{DRN-2H} & mean & 0.0071 & 0.0074 & 0.0024 & \\
& std & 0.0082 & 0.0102 & 0.0249 & \\
& max & 0.0996 & 0.1459 & 0.4897 & \\
\midrule
\multirow{3}{*}{DRN-3H} & mean & 0.0076 & 0.0079 & 0.0053 & 0.0034 \\
& std & 0.0086 & 0.0113 & 0.0124 & 0.0230 \\
& max & 0.1276 & 0.4104 & 0.8345 & 0.4125 \\
\bottomrule
\end{tabular}
\end{sc}
\end{small}
\end{center}
\vskip -0.1in
\end{table}

\section{Discussion}

Since the seminal work of Cybenko \citep{cybenko1989approximation}, the concept of universal function approximation has been extensively studied in the context of neural networks. On the other hand, the computational capabilities of analog physical systems that are not isomorphic to neural networks remain largely unexplored, despite recent efforts, for example, in optical systems \citep{marcucci2020theory}. In this study, we have examined the expressive power of resistor networks, which use voltage sources as inputs and variable resistors as trainable weights. These networks can be trained by gradient descent using local learning rules \citep{kendall2020training,anisetti2024frequency}, and prototypical self-learning resistor networks based on such principles have been built \citep{dillavou2022demonstration,dillavou2024machine}. Our study advances our understanding of the computational properties of these resistor networks, and further highlights their potential. We have shown that, under suitable assumptions, resistor networks equipped with diodes and voltage-controlled voltage sources -- referred to as nonlinear resistive networks -- can approximate any continuous function. Central to our proof is the demonstration that a subclass of these electrical networks -- so-called `deep resistive networks' -- can approximate arbitrary ReLU neural networks with any desired precision. Our result thus offers a method for translating a ReLU neural network into an approximately equivalent DRN.

Nonetheless, our theoretical results rely on several assumptions, some of which are impractical. First, to derive our results, we have assumed ideal diodes, but real-world diodes deviate significantly from our idealized model. Future works could attempt to adapt our approach to networks with more realistic current-voltage ($i$-$v$) characteristics, for example, by adapting the steps from Eq.~\eqref{eq:master-equation} to Eq.~\eqref{eq:relu-nonlinear-activation-function} in our derivation (Appendix~\ref{sec:proofs}). Second, our construction uses VCVSs with very large gains ($A^{(0)} \gg 1$) to amplify the DRN's input signals, which would be impractical in an actual circuit. A solution proposed by Kendall \emph{et al.} \citep{kendall2020training} is to use layerwise `bidirectional amplifiers' with smaller gains, as a replacement for the input VCVSs (see Appendix~\ref{sec:bidirectional-amplifiers} for details). Third, we have assumed that the conductances of the variable resistors can be adjusted to arbitrary non-negative values - arbitrarily large or arbitrarily small -- but in practice, real-world memristors have quantized conductance values and operate within a finite range. Fourth, our model is deterministic, whereas real-world analog electrical networks are inherently noisy. Despite these assumptions, our universal approximation theorem represents the first theoretical result of its kind for analog resistor networks trainable via EP, and it could serve as a basis for deriving more practical theorems requiring fewer or more realistic assumptions.

We also note that nonlinear resistive networks are closely related to continuous Hopfield networks (CHNs) \citep{hopfield1984neurons} (for a comparison, see \citep[Appendix E]{scellier2024fast}). In particular, the idea of rescaling the weights layerwise, so the DRN behaves approximately as a ReLU neural network, is reminiscent of the work by Xie and Seung \citep{xie2003equivalence} where it is shown that layered neural networks can be approximated by layered CHNs using a similar technique. The similarity between nonlinear resistive networks and CHNs is appealing, particularly as recent studies have shown through simulations that CHNs can be effectively trained using equilibrium propagation on datasets like CIFAR10, CIFAR100, and ImageNet $32 \times 32$, yielding promising results \citep{laborieux2021scaling,laborieux2022holomorphic,scellier2023energy}.

\section*{Acknowledgements}

The authors thank Jack Kendall, Maxence Ernoult, Mohammed Fouda, Suhas Kumar, Vidyesh Anisetti, Andrea Liu, Axel Laborieux and Tim De Ryck for discussions.

\bibliographystyle{icml2024}
\bibliography{biblio}

\clearpage
\appendix

\onecolumn  

\newpage
\section{Proofs}
\label{sec:proofs}

In this appendix, we prove the theoretical results, apart from Theorem~\ref{thm:relu-nn-universal} which is well known -- we refer to \cite{leshno1993multilayer} for a proof of it. We proceed as follows. First, we prove Lemma~\ref{lma:deep-resistive-network}. Next, we prove Lemma~\ref{lma:maximum-principle} using Lemma~\ref{lma:deep-resistive-network}. Then, we prove Theorem~\ref{thm:drn-approximate-relu-nn} using Lemma~\ref{lma:deep-resistive-network} and Lemma~\ref{lma:maximum-principle}. Finally, we prove Theorem~\ref{thm:nrn-universal} using Theorem~\ref{thm:relu-nn-universal} and Theorem~\ref{thm:drn-approximate-relu-nn}.

\subsection{Proof of Lemma~\ref{lma:deep-resistive-network}}
\label{sec:proof:deep-resistive-network}

For clarity, we repeat the lemma.

\equationsdrn*

\begin{proof}[Proof of Lemma~\ref{lma:deep-resistive-network}]
Consider unit $k$ in layer $\ell$, with $0 < \ell \leq L$ and $0 \leq k \leq N_\ell-1$. First we consider the case $\ell < L$ of a hidden layer (we will consider the case $\ell=L$ of the output layer later). We denote by $i_k^{(\ell)}$ the current flowing through the diode to the unit's node. The current flowing from ground to the unit's node through the resistor is $- g_k^{(\ell)} v_k^{(\ell)}$. We also denote by $i_{jk}^{(\ell)}$ the current flowing from the $j$-th unit of layer $\ell-1$ to the $k$-th unit of layer $\ell$. Kirchhoff's current law applied to the $k$-th unit of layer $\ell$ tells us that
\begin{equation}
i_k^{(\ell)} - g_k^{(\ell)} v_k^{(\ell)} + \sum_{j=0}^{N_{\ell-1}-1} i_{jk}^{(\ell)} - \sum_{j=2}^{N_{\ell+1}-1} i_{kj}^{(\ell+1)} = 0.
\end{equation}
Furthermore, by Ohm's law we have $i_{jk}^{(\ell)} = g_{jk}^{(\ell)} \left( v_j^{(\ell-1)}-v_k^{(\ell)} \right)$ and $i_{kj}^{(\ell+1)} = g_{kj}^{(\ell+1)} \left( v_k^{(\ell)}-v_j^{(\ell+1)} \right)$. Therefore
\begin{equation}
i_k^{(\ell)} - g_k^{(\ell)} v_k^{(\ell)} + \sum_{j=0}^{N_{\ell-1}-1} g_{jk}^{(\ell)} \left( v_j^{(\ell-1)}-v_k^{(\ell)} \right)
- \sum_{j=2}^{N_{\ell+1}-1} g_{kj}^{(\ell+1)} \left( v_k^{(\ell)}-v_j^{(\ell+1)} \right) = 0.
\label{eq:kirchhoff}
\end{equation}
Solving \eqref{eq:kirchhoff} for $v_k^{(\ell)}$ yields
\begin{equation}
\label{eq:master-equation}
v_k^{(\ell)} = 
\frac{i_k^{(\ell)} + \sum_{j=0}^{N_{\ell-1}-1} g_{jk}^{(\ell)} v_j^{(\ell-1)} + \sum_{j=2}^{N_{\ell+1}-1} g_{kj}^{(\ell+1)} v_j^{(\ell+1)}}{g_k^{(\ell)} + \sum_{j=0}^{N_{\ell-1}-1} g_{jk}^{(\ell)} + \sum_{j=2}^{N_{\ell+1}-1} g_{kj}^{(\ell+1)}}
\end{equation}
Now we use the characteristics of the diode between the unit's node and ground. First we consider the case of an excitatory unit ($v_k^{(\ell)} \geq 0$). We distinguish between two subcases: either $v_k^{(\ell)} > 0$ and $i_k^{(\ell)} = 0$ (the diode is in the `off state'), or $v_k^{(\ell)} = 0$ and $i_k^{(\ell)} \geq 0$ (the diode is in the `on state'). In the first subcase (off state) we have, using \eqref{eq:master-equation},
\begin{equation}
\label{eq:first-case}
0 < v_k^{(\ell)} = \frac{\sum_{j=0}^{N_{\ell-1}-1} g_{jk}^{(\ell)} v_j^{(\ell-1)} + \sum_{j=2}^{N_{\ell+1}-1} g_{kj}^{(\ell+1)} v_j^{(\ell+1)}}{g_k^{(\ell)} + \sum_{j=0}^{N_{\ell-1}-1} g_{jk}^{(\ell)} + \sum_{j=2}^{N_{\ell+1}-1} g_{kj}^{(\ell+1)}}.
\end{equation}
In the second subcase (on state) we have, using \eqref{eq:master-equation} again,
\begin{equation}
\label{eq:second-case}
0 = v_k^{(\ell)} \geq \frac{\sum_{j=0}^{N_{\ell-1}-1} g_{jk}^{(\ell)} v_j^{(\ell-1)} + \sum_{j=2}^{N_{\ell+1}-1} g_{kj}^{(\ell+1)} v_j^{(\ell+1)}}{g_k^{(\ell)} + \sum_{j=0}^{N_{\ell-1}-1} g_{jk}^{(\ell)} + \sum_{j=2}^{N_{\ell+1}-1} g_{kj}^{(\ell+1)}}.
\end{equation}
From \eqref{eq:first-case} and \eqref{eq:second-case} it follows that both subcases are captured by a single formula:
\begin{equation}
\label{eq:relu-nonlinear-activation-function}
v_k^{(\ell)} =
\max \left( 0, \frac{\sum_{j=0}^{N_{\ell-1}-1} g_{jk}^{(\ell)} v_j^{(\ell-1)} + \sum_{j=2}^{N_{\ell+1}-1} g_{kj}^{(\ell+1)} v_j^{(\ell+1)}}{g_k^{(\ell)} + \sum_{j=0}^{N_{\ell-1}-1} g_{jk}^{(\ell)} + \sum_{j=2}^{N_{\ell+1}-1} g_{kj}^{(\ell+1)}} \right).
\end{equation}
Similarly, an inhibitory unit satisfies
\begin{equation}
v_k^{(\ell)} =
\min \left( 0, \frac{\sum_{j=0}^{N_{\ell-1}-1} g_{jk}^{(\ell)} v_j^{(\ell-1)} + \sum_{j=2}^{N_{\ell+1}-1} g_{kj}^{(\ell+1)} v_j^{(\ell+1)}}{g_k^{(\ell)} + \sum_{j=0}^{N_{\ell-1}-1} g_{jk}^{(\ell)} + \sum_{j=2}^{N_{\ell+1}-1} g_{kj}^{(\ell+1)}} \right).
\end{equation}
The case of output nodes ($\ell = L$) is identical, with $N_{L+1} = 0$ (meaning the layer of index $L+1$ is empty) and $i_k^{(L)} = 0$ (no diode). Equation~\eqref{eq:master-equation} in this case is rewritten as
\begin{equation}
v_k^{(L)} = 
\frac{\sum_{j=0}^{N_{L-1}-1} g_{jk}^{(L)} v_j^{(L-1)}}{g_k^{(L)} + \sum_{j=0}^{N_{L-1}-1} g_{jk}^{(L)}}
\end{equation}
\end{proof}

\subsection{Proof of Lemma \ref{lma:maximum-principle}}
\label{sec:proof:lma:maximum-principle}

\maximumprinciple*

\begin{proof}[Proof of Lemma \ref{lma:maximum-principle}]
To prove \eqref{eq:to-prove}, we proceed by induction on the layer index $\ell$, starting from the output layer ($\ell=L$), down to the first hidden layer ($\ell=1$). We start with $\ell=L$. Using Lemma~\ref{lma:deep-resistive-network} we have for every output unit $k$,
\begin{align}
|v_k^{(L)}| & \leq \left| \frac{\sum_{j=0}^{N_{L-1}-1} g_{jk}^{(L)} v_j^{(L-1)}}{g_k^{(L)} + \sum_{j=0}^{N_{L-1}-1} g_{jk}^{(L)}} \right| \\
& \leq \frac{\sum_{j=0}^{N_{L-1}-1} g_{jk}^{(L)} |v_j^{(L-1)}|}{g_k^{(L)} + \sum_{j=0}^{N_{L-1}-1} g_{jk}^{(L)}} \\
& \leq \frac{\sum_{j=0}^{N_{L-1}-1} g_{jk}^{(L)} |v_j^{(L-1)}|}{\sum_{j=1}^{N_{L-1}-1} g_{jk}^{(L)}} \\
& \leq \frac{\sum_{j=0}^1 g_{jk}^{(L)} A^{(L-1)} + \sum_{j=2}^{N_{L-1}-1} g_{jk}^{(L)} v_{\rm max}^{(L-1)}}{\sum_{j=0}^{N_{L-1}-1} g_{jk}^{(L)}} \\
& \leq \max \left( v_{\rm max}^{(L-1)}, A^{(L-1)} \right).
\end{align}
Since this holds for every output unit ($1 \leq k \leq N_L$), it follows that
\begin{equation}
v_{\rm max}^{(L)} \leq \max \left( v_{\rm max}^{(L-1)}, A^{(L-1)} \right).
\end{equation}
This completes the initialization step of the proof by induction.

Suppose now that \eqref{eq:to-prove} holds for layer $\ell+1$ where $2 \leq \ell+1 \leq L$, that is,
\begin{equation}
v_{\rm max}^{(\ell+1)} \leq \max \left( v_{\rm max}^{(\ell)}, A^{(\ell)}, \ldots, A^{(L-1)} \right),
\end{equation}
and let us show that the property holds for layer $\ell$. To show this, we distinguish between two cases: either $v_{\rm max}^{(\ell)} \leq \max \left( A^{(\ell)}, \ldots, A^{(L-1)} \right)$ or $v_{\rm max}^{(\ell)} > \max \left( A^{(\ell)}, \ldots, A^{(L-1)} \right)$. The first case directly leads to $v_{\rm max}^{(\ell)} \leq \max \left( v_{\rm max}^{(\ell-1)}, A^{(\ell-1)}, \ldots, A^{(L-1)} \right)$, as desired. The second case implies that
\begin{equation}
\label{eq:proof-3}
v_{\rm max}^{(\ell+1)} \leq v_{\rm max}^{(\ell)}.
\end{equation}
Next, using Lemma~\ref{lma:deep-resistive-network} again,
\begin{align}
|v_k^{(\ell)}| & \leq \left| \frac{\sum_{j=0}^{N_{\ell-1}-1} g_{jk}^{(\ell)} v_j^{(\ell-1)} + \sum_{j=2}^{N_{\ell+1}-1} g_{kj}^{(\ell+1)} v_j^{(\ell+1)}}{g_k^{(\ell)} + \sum_{j=0}^{N_{\ell-1}-1} g_{jk}^{(\ell)} + \sum_{j=2}^{N_{\ell+1}-1} g_{kj}^{(\ell+1)}} \right| \\
& \leq \frac{\sum_{j=0}^{N_{\ell-1}-1} g_{jk}^{(\ell)} |v_j^{(\ell-1)}| + \sum_{j=2}^{N_{\ell+1}-1} g_{kj}^{(\ell+1)} |v_j^{(\ell+1)}|}{g_k^{(\ell)} + \sum_{j=0}^{N_{\ell-1}-1} g_{jk}^{(\ell)} + \sum_{j=2}^{N_{\ell+1}-1} g_{kj}^{(\ell+1)}} \\
& \leq \frac{\sum_{j=0}^{N_{\ell-1}-1} g_{jk}^{(\ell)} |v_j^{(\ell-1)}| + \sum_{j=2}^{N_{\ell+1}-1} g_{kj}^{(\ell+1)} |v_j^{(\ell+1)}|}{\sum_{j=0}^{N_{\ell-1}-1} g_{jk}^{(\ell)} + \sum_{j=2}^{N_{\ell+1}-1} g_{kj}^{(\ell+1)}} \\
& \leq \frac{\sum_{j=0}^1 g_{jk}^{(\ell)} A^{(\ell-1)} + \sum_{j=2}^{N_{\ell-1}-1} g_{jk}^{(\ell)} v_{\rm max}^{(\ell-1)} + \sum_{j=2}^{N_{\ell+1}-1} g_{kj}^{(\ell+1)} v_{\rm max}^{(\ell+1)}}{\sum_{j=0}^{N_{\ell-1}-1} g_{jk}^{(\ell)} + \sum_{j=2}^{N_{\ell+1}-1} g_{kj}^{(\ell+1)}} \\
& \leq \frac{\sum_{j=0}^1 g_{jk}^{(\ell)} A^{(\ell-1)} + \sum_{j=2}^{N_{\ell-1}-1} g_{jk}^{(\ell)} v_{\rm max}^{(\ell-1)} + \sum_{j=2}^{N_{\ell+1}-1} g_{kj}^{(\ell+1)} v_{\rm max}^{(\ell)}}{\sum_{j=0}^{N_{\ell-1}-1} g_{jk}^{(\ell)} + \sum_{j=2}^{N_{\ell+1}-1} g_{kj}^{(\ell+1)}},
\end{align}
where we have used Eq.~\eqref{eq:proof-3}. Next we select the index $k$ ($2 \leq k \leq N_\ell-1$) that achieves the largest magnitude, that is, $|v_k^{(\ell)}| = v_{\rm max}^{(\ell)}$. We get
\begin{equation}
v_{\rm max}^{(\ell)} \leq \frac{\sum_{j=0}^1 g_{jk}^{(\ell)} A^{(\ell-1)} + \sum_{j=2}^{N_{\ell-1}-1} g_{jk}^{(\ell)} v_{\rm max}^{(\ell-1)} + \sum_{j=2}^{N_{\ell+1}-1} g_{kj}^{(\ell+1)} v_{\rm max}^{(\ell)}}{\sum_{j=0}^{N_{\ell-1}-1} g_{jk}^{(\ell)} + \sum_{j=2}^{N_{\ell+1}-1} g_{kj}^{(\ell+1)}}.
\end{equation}
Rearranging the terms, we successively obtain
\begin{gather}
\left( \sum_{j=0}^{N_{\ell-1}-1} g_{jk}^{(\ell)} + \sum_{j=2}^{N_{\ell+1}-1} g_{kj}^{(\ell+1)} \right) v_{\rm max}^{(\ell)}
\leq \sum_{j=0}^1 g_{jk}^{(\ell)} A^{(\ell-1)} + \sum_{j=2}^{N_{\ell-1}-1} g_{jk}^{(\ell)} v_{\rm max}^{(\ell-1)} + \sum_{j=2}^{N_{\ell+1}-1} g_{kj}^{(\ell+1)} v_{\rm max}^{(\ell)}, \\
\left( \sum_{j=0}^{N_{\ell-1}-1} g_{jk}^{(\ell)} \right) v_{\rm max}^{(\ell)} \leq \sum_{j=0}^1 g_{jk}^{(\ell)} A^{(\ell-1)} + \sum_{j=2}^{N_{\ell-1}-1} g_{jk}^{(\ell)} v_{\rm max}^{(\ell-1)}, \\
v_{\rm max}^{(\ell)} \leq \frac{\sum_{j=0}^1 g_{jk}^{(\ell)} A^{(\ell-1)} + \sum_{j=2}^{N_{\ell-1}-1} g_{jk}^{(\ell)} v_{\rm max}^{(\ell-1)}}{\sum_{j=0}^{N_{\ell-1}-1} g_{jk}^{(\ell)}} \leq \max \left( A^{(\ell-1)}, v_{\rm max}^{(\ell-1)} \right),
\end{gather}
where we have used that a weighted mean of terms with positive weights is bounded above by the max of the terms. Again this leads to
\begin{equation}
v_{\rm max}^{(\ell)} \leq \max \left( v_{\rm max}^{(\ell-1)}, A^{(\ell-1)}, \ldots, A^{(L-1)} \right),
\end{equation}
as desired. This completes the induction step, and therefore completes the proof.
\end{proof}

\subsection{Proof of Theorem~\ref{thm:drn-approximate-relu-nn}}
\label{sec:proof:drn-approximate-relu-nn}

\approximationrelunndrn*

\begin{proof}[Proof of Theorem~\ref{thm:drn-approximate-relu-nn}]
We prove property \eqref{eq:thm-approx} by induction on $\ell$. It is true for $\ell=0$ thanks to \eqref{eq:thm-initial-condition}. Suppose that \eqref{eq:thm-approx} is true for some $\ell-1 \geq 0$ and let us prove it at the rank $\ell$. Let $k$ such that $1 \leq k \leq M_\ell$, and consider the excitatory unit $v_{2k}^{(\ell)}$ (the case of inhibitory unit $v_{2k+1}^{(\ell)}$, and the case of linear output unit $v_{k}^{(L)}$ if $\ell=L$, are similar). Recall from Lemma~\ref{lma:deep-resistive-network} that
\begin{equation}
\label{eq:big-expression}
v_{2k}^{(\ell)} =
\max \left( 0, \frac{\sum_{j=0}^{N_{\ell-1}-1} g_{j,2k}^{(\ell)} v_j^{(\ell-1)} + \sum_{j=2}^{N_{\ell+1}-1} g_{2k,j}^{(\ell+1)} v_j^{(\ell+1)}}{g_{2k}^{(\ell)} + \sum_{j=0}^{N_{\ell-1}-1} g^{(\ell)}_{j,2k} + \sum_{j=2}^{N_{\ell+1}-1} g^{(\ell+1)}_{2k,j}} \right).
\end{equation}
First we calculate the denominator of expression \eqref{eq:big-expression}. By distinguishing between the two cases $w^{(\ell)}_{jk} \geq 0$ and $w^{(\ell)}_{jk} \leq 0$, it is easy to see that we have in both cases
\begin{equation}
g^{(\ell)}_{2j,2k} + g^{(\ell)}_{2j+1,2k} = \gamma^\ell |w^{(\ell)}_{jk}|.
\end{equation}
Summing from $j=0$ to $j=M_{\ell-1}$, we get
\begin{equation}
\sum_{j=0}^{N_{\ell-1}-1} g^{(\ell)}_{j,2k} = \gamma^\ell \sum_{j=0}^{M_{\ell-1}} |w_{jk}^{(\ell)}|.
\end{equation}
Recalling that, by definition, $g_{2k}^{(\ell)} := \gamma^\ell \left( a^{(\ell)} - \sum_{j=0}^{M_{\ell-1}} |w_{jk}^{(\ell)}| \right)$, it follows that
\begin{equation}
g_{2k}^{(\ell)} + \sum_{j=0}^{N_{\ell-1}-1} g^{(\ell)}_{j,2k} =\gamma^\ell a^{(\ell)}.
\end{equation}
Thus, we get the following formula for the denominator in \eqref{eq:big-expression}:
\begin{equation}
\label{eq:proof-1}
\underbrace{g_{2k}^{(\ell)} + \sum_{j=0}^{N_{\ell-1}-1} g^{(\ell)}_{j,2k}}_{=\gamma^\ell a^{(\ell)}} + \underbrace{\sum_{j=2}^{N_{\ell+1}-1} g^{(\ell+1)}_{2k,j}}_{=O(\gamma^{\ell+1})} = \gamma^\ell a^{(\ell)} + O(\gamma^{\ell+1}).
\end{equation}
Next, we calculate the numerator of \eqref{eq:big-expression}. Similarly, whether $w^{(\ell)}_{jk} \geq 0$ or $w^{(\ell)}_{jk} \leq 0$, we have in both cases, using the inductive hypothesis,
\begin{equation}
g^{(\ell)}_{2j,2k} v_{2j}^{(\ell-1)} + g^{(\ell)}_{2j+1,2k} v_{2j+1}^{(\ell-1)} = \gamma^\ell A^{(\ell-1)} w^{(\ell)}_{jk} s_j^{(\ell-1)} + O(\gamma^{\ell+1} \|x\|_{\infty}).
\end{equation}
This identity also holds for $j=0$ by defining $s_0^{(\ell-1)} := 1$ to include the case of the biases. Summing from $j=0$ to $j=M_{\ell-1}$, we get
\begin{equation}
\sum_{j=0}^{N_{\ell-1}-1} g_{j,2k}^{(\ell)} v_{j}^{(\ell-1)} + \underbrace{\sum_{j=2}^{N_{\ell+1}-1} g_{2k,j}^{(\ell+1)} v_{j}^{(\ell+1)}}_{=O(\gamma^{\ell+1} \|x\|_{\infty}) \; \text{by Lemma~\ref{lma:maximum-principle}}}
= \gamma^\ell A^{(\ell-1)} \left( \sum_{j=0}^{M_{\ell-1}} w_{jk}^{(\ell)} s_j^{(\ell-1)} \right) + O(\gamma^{\ell+1} \|x\|_{\infty}).
\label{eq:proof-2}
\end{equation}
Here we have used the maximum principle (Lemma~\ref{lma:maximum-principle}) to derive the bound $O(\gamma^{\ell+1} \|x\|_{\infty})$ for the $v_{j}^{(\ell+1)}$ term. Finally, substituting \eqref{eq:proof-1} and \eqref{eq:proof-2} into \eqref{eq:big-expression}, we obtain the following equation for $v_{2k}^{(\ell)}$:
\begin{equation}
v_{2k}^{(\ell)} = \max \left( 0, \frac{A^{(\ell-1)}}{a^{(\ell)}} \left( \sum_{j=1}^{M_{\ell-1}} w_{jk}^{(\ell)} s_j^{(\ell-1)} + b_k^{(\ell)} \right) \right) + O(\gamma \|x\|_{\infty})
= A^{(\ell)} s_k^{(\ell)} + O(\gamma \|x\|_{\infty}),
\end{equation}
where we have used that $A^{(\ell-1)} = a^{(\ell)} A^{(\ell)}$, by definition. The case of $v_{2k+1}^{(\ell)}$ (inhibitory unit) is similar: we obtain
\begin{equation}
v_{2k+1}^{(\ell)} = - A^{(\ell)} s_k^{(\ell)} + O(\gamma \|x\|_{\infty}).
\end{equation}
Hence, property \eqref{eq:thm-approx} is true at the rank $\ell$. (The specific case of the linear output layer can be treated similarly.) This completes the proof by induction.
\end{proof}

\subsection{Proof of Theorem~\ref{thm:nrn-universal}}

\nonlinearresistivenetwork*

\begin{proof}[Proof of Theorem~\ref{thm:nrn-universal}]
Let $f: \mathbb{R}^q \to \mathbb{R}^r$ be a continuous function. Let $C$ be a compact subset of $\mathbb{R}^q$, and let $\epsilon>0$. By Theorem~\ref{thm:relu-nn-universal}, there exists a ReLU neural network $\mathcal{N}$ with $q$ inputs and $r$ outputs such that the function $G_{\mathcal{N}}$ that it implements satisfies
\begin{equation}
\| G_{\mathcal{N}}(x) - f(x) \| \leq \frac{\epsilon}{2}, \qquad \forall x \in C.
\end{equation}
Let $\mathcal{R}_\gamma$ be the DRN approximator of $\mathcal{N}$ with parameter $\gamma$, as defined in Section~\ref{sec:universal-approximators}. By Theorem~\ref{thm:drn-approximate-relu-nn}, the function $F_{\mathcal{R}_\gamma}$ implemented by $\mathcal{R}_\gamma$ satisfies
\begin{equation}
\| F_{\mathcal{R}_\gamma}(x) - G_{\mathcal{N}}(x) \| = O \left( \gamma \|x\|_{\infty} \right),
\end{equation}
thus, there exists $\gamma > 0$ such that
\begin{equation}
\| F_{\mathcal{R}_\gamma}(x) - G_{\mathcal{N}}(x) \| \leq \frac{\epsilon}{2}, \qquad \forall x \in C.
\end{equation}
Combining the two inequalities, we obtain
\begin{equation}
\| F_{\mathcal{R}}(x) - f(x) \| \leq \| F_{\mathcal{R}}(x) - G_{\mathcal{N}}(x) \| + \| G_{\mathcal{N}}(x) - f(x) \| \\
\leq \epsilon, \qquad \forall x \in C.
\end{equation}
Hence the result.
\end{proof}

\clearpage
\section{Bidirectional Amplifiers}
\label{sec:bidirectional-amplifiers}

One constraint of deep resistive networks is the decay in amplitude across the layers of the network, as the depth increases, and therefore the need to amplify the input voltages by a large gain $A^{(0)} \gg 1$ to compensate for this decay. Instead of amplifying input voltages by a large factor $A^{(0)}$, another solution proposed by Kendall \emph{et al.} \citep{kendall2020training} is to equip each `unit' (or node) with a \textit{bidirectional amplifier}.

A bidirectional amplifier is a three-terminal device, with bottom ($B$), left ($L$), and right ($R$) terminals. The bottom terminal is linked to ground. The current and voltage states $(v_L,i_L)$ and $(v_R,i_R)$ of the left and right terminals satisfy $v_R = a \, v_L$ and $i_R = a \, i_L$, for some positive constant $a$, the \textit{gain} of the bidirectional amplifier (see Figure~\ref{fig:bidirectional-amplifier}). Voltages are amplified in the forward direction by a factor $a>1$, and currents are amplified in the backward direction by a factor $1/a$. In practice, a bidirectional amplifier can be formed by assembling a voltage-controlled voltage source and a current-controlled current source.

We then equip each unit with a bidirectional amplifier, placed before the resistor and the diode (Figure~\ref{fig:bidirectional-amplifier}). We define the unit's state as the voltage after amplification. All the units of a given layer $\ell$ have the same gain $a^{(\ell)}$ as defined by \eqref{eq:layer-gain}. Lemma~\ref{lma:deep-resistive-network} can be restated as follows: for every $(\ell,k)$ such that $1 \leq \ell \leq L-1$ and $0 \leq k \leq N_\ell-1$, the equation for $v_k^{(\ell)}$ is
\begin{equation}
v_k^{(\ell)} = 
\left\{
\begin{array}{l}
    \displaystyle \max \left( 0, \frac{\sum_{j=0}^{N_{\ell-1}-1} g_{jk}^{(\ell)} a^{(\ell)} v_j^{(\ell-1)} + \sum_{j=2}^{N_{\ell+1}-1} g_{kj}^{(\ell+1)} \frac{v_j^{(\ell+1)}}{ a^{(\ell+1)}}}{g_k^{(\ell)} + \sum_{j=0}^{N_{\ell-1}-1} g_{jk}^{(\ell)} + \sum_{j=2}^{N_{\ell+1}-1} g_{kj}^{(\ell+1)}} \right) \qquad \text{if k is even (the unit is excitatory)}, \\
   \displaystyle \min \left( 0, \frac{\sum_{j=0}^{N_{\ell-1}-1} g_{jk}^{(\ell)} a^{(\ell)} v_j^{(\ell-1)} + \sum_{j=2}^{N_{\ell+1}-1} g_{kj}^{(\ell+1)} \frac{v_j^{(\ell+1)}}{ a^{(\ell+1)}}}{g_k^{(\ell)} + \sum_{j=0}^{N_{\ell-1}-1} g_{jk}^{(\ell)} + \sum_{j=2}^{N_{\ell+1}-1} g_{kj}^{(\ell+1)}} \right) \qquad \text{if k is odd (the unit is inhibitory)}.
\end{array}
\right.
\end{equation}
As for the equation of the output unit $v_k^{(L)}$,
\begin{equation}
v_k^{(L)} = 
\frac{\sum_{j=0}^{N_{L-1}-1} g_{jk}^{(L)} a^{(L)} v_j^{(L-1)}}{g_k^{(L)} + \sum_{j=0}^{N_{L-1}-1} g_{jk}^{(L)}}, \qquad 0 \leq k \leq N_L-1.
\end{equation}
In these expressions, $\frac{v_j^{(\ell+1)}}{a^{(\ell+1)}}$ represents the voltage of the $j$-th unit of layer $\ell+1$ \textit{before} amplification.

Lemma~\ref{lma:maximum-principle} must also be adapted: with bidirectional amplifiers, the bounds depend not only on the boundary conditions (input and bias voltages) but also on the layerwise gains of the bidirectional amplifiers ($a^{(\ell)}$). For simplicity, let us assume no bias voltages. Then the equation for $v_{\rm max}^{(\ell)} := \max_{2 \leq j \leq N_\ell-1} |v_j^{(\ell)}|$ takes the form
\begin{equation}
v_{\rm max}^{(\ell)} \leq a^{(\ell)} \, v_{\rm max}^{(\ell-1)}, \qquad 1 \leq \ell \leq L.
\end{equation}
Multiplying by $A^{(\ell)} = a^{(\ell+1)} \times \cdots \times a^{(L)}$ on both sides, we get
\begin{equation}
A^{(\ell)} v_{\rm max}^{(\ell)} \leq A^{(\ell-1)} \, v_{\rm max}^{(\ell-1)}, \qquad 1 \leq \ell \leq L.
\end{equation}
Iterating, we obtain
\begin{equation}
A^{(\ell)} | v_k^{(\ell)} | \leq  A^{(0)} \max_{2 \leq j \leq N_0-1} | v_j^{(0)} |, \qquad 0 \leq \ell \leq L, \quad 2 \leq k \leq N_\ell-1.
\end{equation}

\begin{figure}
\begin{center}
\fbox{
\includegraphics[width=0.4\textwidth]{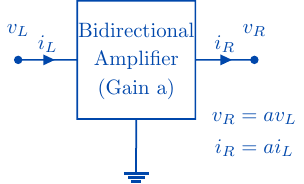}
}
\vspace{0.5cm}
\includegraphics[width=0.95\textwidth]{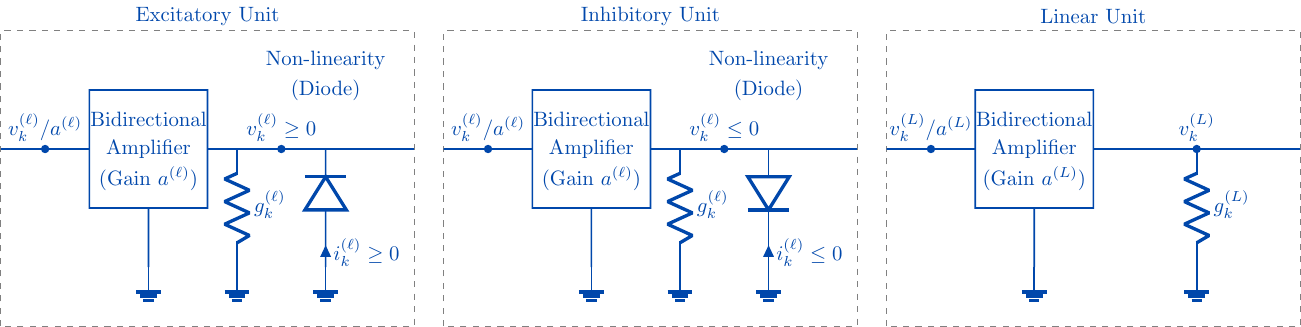}
\fbox{
\includegraphics[width=0.95\textwidth]{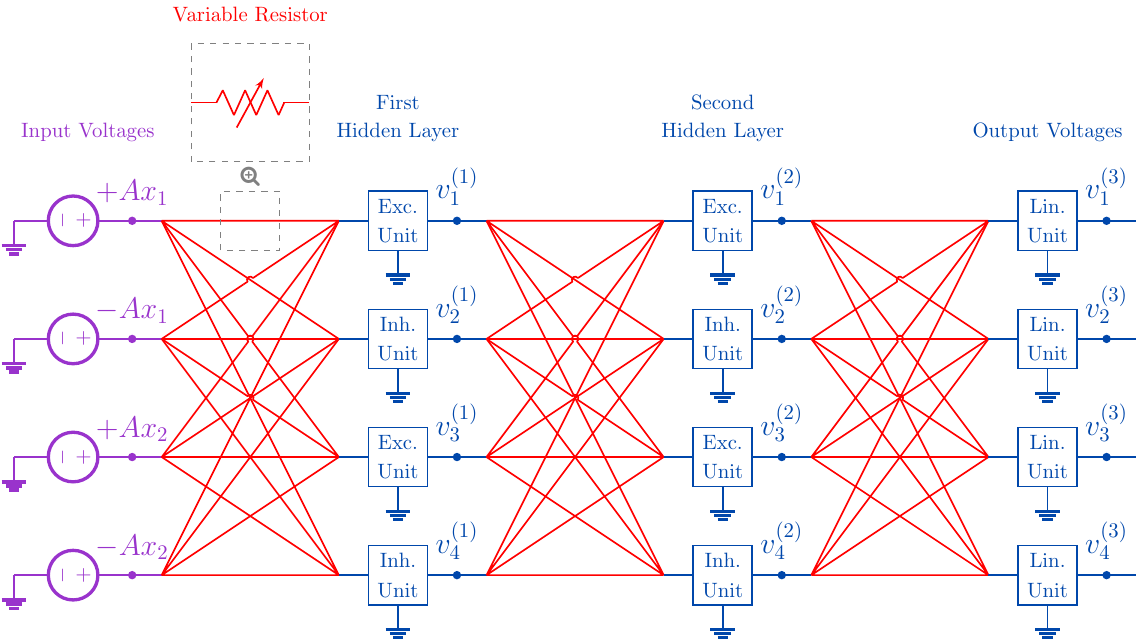}
}
\end{center}
\caption{
\textbf{Top.} Bidirectional amplifier with gain $a$. \textbf{Middle.} A unit is composed of a bidirectional amplifier (with amplification factor $a$), followed by a resistor and a diode between the unit's node and ground. Depending on the orientation of the diode, units come in two flavors: excitatory units and inhibitory units.
\textbf{Bottom.} A DRN with bidirectional amplifiers.
\label{fig:bidirectional-amplifier}
}
\end{figure}

\clearpage
\section{Simulation Details}
\label{sec:simulation-details}

In this appendix, we provide the details of our simulations (Section~\ref{sec:simulations}). We train DRNs with one, two and three hidden layers, and we compare their performances to their corresponding ReLU neural networks with one, two, and three hidden layers. Our code to reproduce the results is available at \url{https://github.com/rain-neuromorphics/energy-based-learning}

\paragraph{Datasets.}
We perform simulations on the MNIST, Kuzushiji-MNIST, and Fashion-MNIST datasets.

The MNIST dataset \citep{lecun1998gradient} consists of 60000 training images and 10000 test images. Each image is a 28x28 pixel grayscale representation of a handwritten digit ranging from 0 to 9. Each image is associated with a label indicating the digit it represents.

The Kuzushiji-MNIST dataset \citep{clanuwat2018deep} is a dropin replacement for the MNIST dataset, also consisting of 60000 training images and 10000 test images, each a 28x28 pixel grayscale image. The images represent handwritten kuzushiji (cursive Japanese) hiragana characters. Like MNIST, each image in the Kuzushiji-MNIST dataset comes with an associated label, corresponding to Japanese characters.

The Fashion-MNIST dataset \citep{xiao2017fashion} also features 60000 training images and 10000 test images. These images are also 28x28 pixels in grayscale, depicting various fashion items categorized into ten types such as shirts, trousers, and shoes.

\paragraph{Optimizer and scheduler.}
We optimize the mean squared error. We perform stochastic gradient descent with momentum $\mu=0.9$. No weight decay is used. We also use an `exponential learning rate scheduler' with a decay rate of $0.99$ for the learning rates at each epoch of training. We use mini batches of size 32 and we perform 100 epochs of training.

\paragraph{Computational resources.}
Our code for the simulations uses PyTorch 1.13.1 and TorchVision 0.14.1. \cite{paszke2017automatic}. The simulations were carried on three Nvidia A100 GPUs (one for each of the three sets of simulations corresponding to the three datasets). Each run took between 1 and 5 hours to complete, depending on the network type (DRN or ReLU neural network) and network architecture (1H, 2H, or 3H).

\subsection{Simulation Details of DRNs}

Table~\ref{table:hparams-drn} contains the hyperparameters used to obtain the results presented in Table~\ref{table:results}.

\paragraph{DRN architecture.}
We train deep resistive networks with one, two, and three hidden layers, termed DRN-1H, DRN-2H, and DRN-3H, respectively. Each DRN has 1568 input units ($2\times28\times28$), 10 output units and 1024 units per hidden layer. In our implementation for the simulations, we did not include resistors between the units' nodes and ground (parallel to the diodes).

\paragraph{Input amplification factor.} Input voltages are amplified by a (nontrainable) gain factor $A^{(0)}$. The value of $A^{(0)}$ is reported in Table~\ref{table:hparams-drn} for each DRN architecture (DRN-1H, DRN-2H, and DRN-3H).

\paragraph{Initialization of the conductances of the resistors.} Given two consecutive layers of indices $\ell$ and $\ell+1$, denoting by $N_\ell$ the number of units in layer $\ell$, we initialize the matrix $g^{(\ell+1)}$ of conductances between these two layers as
\begin{equation}
g_{jk}^{(\ell+1)} = \max \left( 0,h_{jk}^{(\ell+1)} \right), \qquad h_{jk}^{(\ell+1)} \sim \mathcal{U}(-c,+c), \qquad c := \sqrt{\frac{1}{N_\ell}}.
\end{equation}

\paragraph{Computing the steady state.}
We compute the steady state of the DRN with the following algorithm, introduced in \cite{scellier2024fast}. First we set the state of input units ($\ell=0$) to their input values, and we initialize the state of the hidden layers and output layer to zero. Then, for $\ell=1$ to $\ell=L-1$, we perform the following operations. For every $k$ such that $1 \leq k \leq N_\ell$, we update the state of unit $v_k^{(\ell)}$ as follows:
\begin{equation}
\label{eq:update-rule-hidden-drn}
v_k^{(\ell)} \leftarrow 
\left\{
\begin{array}{l}
\displaystyle \max \left( 0, p_k^{(\ell)} \right) \qquad \text{if k is even}, \\
\displaystyle \min \left( 0, p_k^{(\ell)} \right) \qquad \text{if k is odd},
\end{array}
\right.
\qquad \text{where} \qquad
p_k^{(\ell)} := \frac{\sum_{j=1}^{N_{\ell-1}} g_{jk}^{(\ell)} v_j^{(\ell-1)} + \sum_{j=1}^{N_{\ell+1}} g_{kj}^{(\ell+1)} v_j^{(\ell+1)}}{\sum_{j=1}^{N_{\ell-1}} g_{jk}^{(\ell)} + \sum_{j=1}^{N_{\ell+1}} g_{kj}^{(\ell+1)}}.
\end{equation}
For the output layer ($\ell=L$), the update rule for the states of the output units $v_k^{(L)}$ is
\begin{equation}
\label{eq:update-rule-output-drn}
v_k^{(L)} \leftarrow 
\frac{\sum_{j=1}^{N_{\ell-1}} g_{jk}^{(L)} v_j^{(L-1)}}{\sum_{j=1}^{N_{\ell-1}} g_{jk}^{(L)}}.
\end{equation}
Importantly, we may update all the units of a given layer simultaneously, rather than sequentially, thus taking advantage of the parallelism of GPUs. Updating all the layers once, from inputs to output, constitutes one `iteration'. We perform $T$ iterations of this process, until convergence to steady state. This algorithm is an instance of exact block coordinate descent.

\paragraph{Training procedure.} We train our networks by stochastic gradient descent. At each training step, we proceed as follows. First we pick a minibatch $x$ of examples in the training set, and their corresponding labels $y$. Then we perform $T$ iterations of the layer updates until convergence to the steady state. Once at steady state, we compute (or measure) the training error rate for the current minibatch, to monitor training. Next, we perform another $K$ iterations of the layer updates, and we compute the gradient of the cost function with respect to the weights (conductances) through the trajectory. This corresponds to a form of truncated backpropagation, where we perform $T+K$ steps in total, but we only backpropagate through the last $K$ steps. Then, we update all the conductances simultaneously, proportionally to their gradient. Finally, we clip all the negative conductance values to zero, to ensure that all the conductances remain non-negative.

We also performed simulations using equilibrium propagation for extracting the conductance gradients, in place of (truncated) backpropagation (see Appendix~\ref{sec:detailed-simulations}).

\begin{table}[ht!]
\caption{Hyperparameters used for initializing and training the deep resistive networks (DRNs) and reproducing the results of Table~\ref{table:results}. LR means learning rate.}
\label{table:hparams-drn}
\vskip 0.15in
\begin{center}
\begin{small}
\begin{sc}
\begin{tabular}{cccc}
\toprule
 & DRN-1H & DRN-2H & DRN-3H \\
\midrule
Input amplification factor ($A^{(0)}$) & 480 & 2000 & 4000 \\
No. of iterations at inference ($T$) & 4 & 5 & 6 \\
No. of iterations during training ($K$) & 4 & 5 & 6 \\
LR weight 1 \& bias 1 ($\eta_1$) & 0.006 & 0.002 & 0.005 \\
LR weight 2 \& bias 2 ($\eta_2$) & 0.006 & 0.006 & 0.02 \\
LR weight 3 \& bias 3 ($\eta_3$) &  & 0.005 & 0.08 \\
LR weight 4 \& bias 4 ($\eta_4$) &  &  & 0.005 \\
Nudging parameter ($\beta$), used in Table~\ref{table:complete-results} & 0.5 & 1.0 & 2.0 \\
\bottomrule
\end{tabular}
\end{sc}
\end{small}
\end{center}
\vskip -0.1in
\end{table}

\subsection{Simulation Details of ReLU Neural Networks}

As a baseline for our DRNs, we train ReLU neural networks with one, two, and three hidden layers, termed ReLU NN-1H, ReLU NN-2H and ReLU NN-3H, respectively.

\paragraph{Architecture.}
Each of the three ReLU neural network models has 784 input units ($28\times28$), 10 output units, and 512 units per hidden layer. 

\paragraph{Initialization of the weights.}
We use the `Kaiming uniform' weight initialization scheme. Given two consecutive layers of indices $\ell$ and $\ell+1$, denoting by $N_\ell$ the number of units in layer $\ell$, we initialize the matrix of conductances between these two layers as
\begin{equation}
w_{jk}^{(\ell+1)} \sim \mathcal{U}(-c,+c), \qquad c := \sqrt{\frac{1}{N_\ell}}.
\end{equation}

\paragraph{Computing the weight gradients.}
At each step of stochastic gradient descent, we calculate the weight gradients  using backpropagation. We use $\eta=0.005$ as a learning rate for all the weights and biases.

\subsection{Extended Simulation Results}
\label{sec:detailed-simulations}

Table~\ref{table:results} presents the results we obtained using (truncated) backpropagation (TBP) for training DRNs. Table~\ref{table:complete-results} extends the results of Table~\ref{table:results} to include the results we obtained using equilibrium propagation (EP) for training. Table~\ref{table:complete-results} also provides the error rates on the training set (training error rate), in addition to the test error rate.

In all cases, DRNs trained with TBP achieve comparable test performance to their equivalent-size ReLU neural networks. On the other hand, we observe that the training error rate in DRNs is often higher (sometimes significantly higher) than in ReLU neural networks, hinting at an optimization issue for DRNs. We note, however, that our Theorem~\ref{thm:drn-approximate-relu-nn} only states the existence of a configuration of conductance values that approximates the behavior of a ReLU neural network at inference, but it does not provide any guarantee that this configuration of conductance values can easily be reached through gradient descent optimization. We also observe that DRNs trained with EP often achieve lower performance than with TBP. Again, understanding these differences in performance is beyond the scope of our work. In this work, we provide a proof of existence, and we leave these open questions for future works to investigate.

\begin{table}[h!]
\caption{This table provides a more extensive presentation of the results of Table~\ref{table:results}. We compare the performance of DRNs trained with equilibrium propagation (EP), DRNs trained with truncated backpropagation (TBP), and their equivalent-size ReLU neural networks trained with backpropagation (BP). We train DRNs and ReLU neural networks with one, two, and three hidden layers on MNIST, Kuzushiji-MNIST and Fashion-MNIST. For each setting, we perform five runs and report the mean and std values (\%) of the training and test error rates.}
\label{table:complete-results}
\vskip 0.15in
\begin{center}
\begin{small}
\begin{sc}
\begin{tabular}{cccc}
    \toprule
    \multirow{2}{*}{Dataset} & Network Model & \multirow{2}{*}{Test Error (\%)} & \multirow{2}{*}{Train Error (\%)} \\
     & (Learning Algorithm) & & \\
    \midrule
    & DRN-1H (EP) & 1.53 $\pm$ 0.05 & 0.07 $\pm$ 0.00 \\
    & DRN-1H (TBP) & 1.51 $\pm$ 0.04 & 0.04 $\pm$ 0.00 \\
    & ReLU NN-1H (BP) & 1.67 $\pm$ 0.03 & 0.22 $\pm$ 0.01 \\
    \cmidrule(r){2-4}
    \multirow{3}{*}{MNIST} & DRN-2H (EP) & 1.60 $\pm$ 0.05 & 0.20 $\pm$ 0.01 \\
    & DRN-2H (TBP) & 1.46 $\pm$ 0.05 & 0.20 $\pm$ 0.01 \\
    & ReLU NN-2H (BP) & 1.34 $\pm$ 0.01 & 0.03 $\pm$ 0.01 \\
    \cmidrule(r){2-4}
    & DRN-3H (EP) & 1.77 $\pm$ 0.08 & 0.36 $\pm$ 0.01 \\
    & DRN-3H (TBP) & 1.47 $\pm$ 0.08 & 0.30 $\pm$ 0.01 \\
    & ReLU NN-3H (BP) & 1.48 $\pm$ 0.03 & 0.02 $\pm$ 0.00 \\
    \midrule
    & DRN-1H (EP) & 7.57 $\pm$ 0.11 & 0.08 $\pm$ 0.01 \\
    & DRN-1H (TBP) & 7.67 $\pm$ 0.13 & 0.06 $\pm$ 0.00 \\
    & ReLU NN-1H (BP) & 8.83 $\pm$ 0.11 & 0.12 $\pm$ 0.01 \\
    \cmidrule(r){2-4}
    \multirow{2}{*}{Kuzushiji} & DRN-2H (EP) & 8.40 $\pm$ 0.10 & 0.39 $\pm$ 0.03 \\
    \multirow{2}{*}{MNIST} & DRN-2H (TBP) & 8.27 $\pm$ 0.02 & 0.42 $\pm$ 0.01 \\
    & ReLU NN-2H (BP) & 8.14 $\pm$ 0.13 & 0.05 $\pm$ 0.01 \\
    \cmidrule(r){2-4}
    & DRN-3H (EP) & 9.30 $\pm$ 0.17 & 0.63 $\pm$ 0.03 \\
    & DRN-3H (TBP) & 8.39 $\pm$ 0.24 & 0.63 $\pm$ 0.01 \\
    & ReLU NN-3H (BP) & 7.99 $\pm$ 0.23 & 0.03 $\pm$ 0.01 \\
    \midrule
    & DRN-1H (EP) & 10.38 $\pm$ 0.11 & 6.14 $\pm$ 0.15 \\
    & DRN-1H (TBP) & 10.31 $\pm$ 0.05 & 5.91 $\pm$ 0.19 \\
    & ReLU NN-1H (BP) & 10.20 $\pm$ 0.11 & 5.13 $\pm$ 0.01 \\
    \cmidrule(r){2-4}
    \multirow{2}{*}{Fashion} & DRN-2H (EP) & 10.29 $\pm$ 0.14 & 6.20 $\pm$ 0.06 \\
    \multirow{2}{*}{MNIST} & DRN-2H (TBP) & 9.88 $\pm$ 0.11 & 4.88 $\pm$ 0.05 \\
    & ReLU NN-2H (BP) & 9.42 $\pm$ 0.22 & 2.78 $\pm$ 0.03 \\
    \cmidrule(r){2-4}
    & DRN-3H (EP) & 11.23 $\pm$ 0.17 & 8.19 $\pm$ 0.05 \\
    & DRN-3H (TBP) & 9.79 $\pm$ 0.25 & 4.58 $\pm$ 0.04 \\
    & ReLU NN-3H (BP) & 9.55 $\pm$ 0.14 & 1.82 $\pm$ 0.05 \\
    \bottomrule
\end{tabular}
\end{sc}
\end{small}
\end{center}
\vskip -0.1in
\end{table}

\clearpage
\section{Equilibrium Propagation}
\label{sec:equilibrium-propagation}

The deep resistive networks studied in this work were introduced within the equilibrium propagation training framework \citep{scellier2017equilibrium,kendall2020training}. This appendix highlights the advantages that DRNs and EP offer over analog electrical implementations of neural networks using backpropagation. We begin by discussing the challenges associated with analog implementations of BP.

\subsection{Analog Backpropagation}

There has been a long-standing effort to develop analog versions of neural networks and BP. Most electrical implementations employ crossbar arrays of memristors to perform matrix-vector multiplications (MVMs) \citep{xiao2020analog}, which constitute the core arithmetic operations of neural networks and BP. However, two additional operations are required: computing the nonlinear activation function, and computing its derivative. Based on whether these operations are executed in the analog or digital domain, analog implementations of BP can be broadly divided into two categories.

The first category aims to implement BP entirely in the analog domain, not just the MVMs. For instance, \cite{hermans2015trainable} utilized voltage followers to implement the ReLU activation function during the forward pass and analog switches to compute its derivative during the backward pass. However, this approach faces a significant challenge: analog devices cannot perfectly replicate the desired operations, leading to a mismatch between the activation function and its derivative. In practice, this mismatch will cause discrepancies between the calculated and actual weight gradients, leading to suboptimal optimization and reduced accuracy.

The second category employs mixed-signal (analog/digital) circuits, where MVMs are performed in the analog domain while nonlinear activation functions and their derivatives are computed digitally \citep{xiao2020analog}. Such mixed-signal implementations of neural networks can mitigate the mismatch issue ; however, the energy gains achieved by performing MVMs in the analog domain are offset by the power demands of analog-to-digital conversion required before the activation function \citep{li2015merging}.

Thus, it remains unclear whether BP is the most suitable training algorithm for analog computing platforms. This has led to the exploration of alternative approaches, such as EP, which may be better suited for these systems.

\subsection{Equilibrium Propagation}

Equilibrium propagation \citep{scellier2017equilibrium} has emerged as an alternative to BP for training analog systems, including nonlinear resistive networks \citep{kendall2020training}. For a more general, physics-oriented overview of EP, see \cite{scellier2021deep}. Here, we focus on EP's application in DRNs, and explain how it addresses the mismatch problem found in analog BP.

EP is broadly applicable to energy-based systems that minimize an `energy' (or Lyapunov) function. For a DRN with ideal components, the energy function corresponds to power dissipation:
\begin{equation}
E_{\rm DRN}^{\rm ideal}(v) = \sum_{\ell=1}^L \sum_{j=1}^{N_{\ell}} \sum_{k=1}^{N_{\ell+1}}  g_{jk}^{(\ell)} \left( v_j^{(\ell)}-v_k^{(\ell+1)} \right)^2,
\end{equation}
where $g_{jk}^{(\ell)}$ represents a memristor conductance, $v_j^{(\ell)}$ denotes a node electrical potential, and $v$ is the vector of all node potentials. As shown by \cite{scellier2024fast}, these ideal DRNs reach a steady state $v_\star$, given by
\begin{equation}
v_\star = \underset{v \in \mathcal{S}_{\rm DRN}^{\rm ideal}}{\arg \min} \; E_{\rm DRN}^{\rm ideal}(v),
\end{equation}
where the feasible set $\mathcal{S}_{\rm DRN}^{\rm ideal}$ is defined by diode constraints
\begin{equation}
\mathcal{S}_{\rm DRN}^{\rm ideal} = \{ v \in \mathbb{R}^{\sum_{\ell=1}^L N_\ell} \mid v_k^{(\ell)} \geq 0 \text{ if k is even}, v_k^{(\ell)} \leq 0 \text{ if k is odd}, 1 \leq \ell \leq L-1, 1 \leq k \leq N_\ell \}.
\end{equation}
In the present work, we have assumed ideal diodes to simplify the mathematical analysis of DRNs and derive our universal approximation theorem (specifically, this assumption has enabled us to derive Lemma~\ref{lma:deep-resistive-network} and subsequently an approximate equivalence with ReLU neural networks, Theorem~\ref{thm:drn-approximate-relu-nn}). Similarly, \cite{scellier2024fast} assumed ideal diodes to derive a fast algorithm to simulate DRNs on digital computers. Importantly, however, EP does not rely on this assumption of ideality. In fact, the $i$-$v$ characteristics of nonlinear untrainable elements (diodes) do not need to be explicitly known. Indeed, nonlinear resistive networks with nonideal elements possess an energy function called the `co-content' \citep{millar1951cxvi} or `pseudo power' \citep{kendall2020training}. For instance, the co-content for a DRN with linear resistors and nonideal diodes is of the form
\begin{equation}
E_{\rm DRN}(v) = \sum_{\ell=1}^L \sum_{j=1}^{N_{\ell}} \sum_{k=1}^{N_{\ell+1}}  g_{jk}^{(\ell)} \left( v_j^{(\ell)}-v_k^{(\ell+1)} \right)^2 + E_{\rm diodes}(v),
\end{equation}
where the term $E_{\rm diodes}$ depends solely on the $i$-$v$ characteristics of the diodes. The steady state is then
\begin{equation}
v_\star = \arg \min_v E_{\rm DRN}(v).
\end{equation}
The goal of learning is to minimize a cost function $C(v_\star)$ with respect to the trainable weights (memristor conductances), given boundary input conditions from voltage sources. For example, using the mean squared error,
\begin{equation}
C(v) = \sum_k \left( v_k^{\rm out} - y_k \right)^2,
\end{equation}
where $v_k^{\rm out}$ is the $k$-th output node's potential, and $y_k$ is the $k$-th target output.

The core idea of EP is to interpret $\beta C$ as power dissipation by resistors with conductance $\beta$ linking output nodes to `target nodes' (whose electrical potentials are set to target values). The resulting energy function (co-content) is
\begin{equation}
F(v,\beta) = E_{\rm DRN}(v) + \beta C(v),
\end{equation}
where $\beta \geq 0$ is termed the `nudging parameter'. Each training step of EP consists of:
\begin{enumerate}
\item sourcing input voltages, setting $\beta=0$ and allowing the system to reach the `free state', $v_\star^0 = \arg \min_v F(0,v)$ ;
\item increasing $\beta>0$, and allowing the system to reach the `nudge state', $v_\star^\beta = \arg \min_v F(\beta,v)$.
\end{enumerate}
The gradient of the cost function with respect to the conductances can be estimated using the formula \citep{scellier2017equilibrium}
\begin{equation}
\frac{d}{dg} C \left( v_\star \right) = \left. \frac{d}{d\beta} \right|_{\beta=0} \frac{\partial F}{\partial g} \left( v_\star^\beta,\beta \right) \approx \underbrace{\frac{1}{\beta} \left( \frac{\partial F}{\partial g} \left( v_\star^\beta,\beta \right) - \frac{\partial F}{\partial g} \left( v_\star^0,0 \right) \right)}_{=: \widehat{\nabla}_g(\beta)}.
\end{equation}
Using the analytical form of the energy function, we obtain
\begin{equation}
\widehat{\nabla}_{g_{jk}^{(\ell)}}(\beta) = \frac{1}{\beta} \left( \frac{\partial F}{\partial g_{jk}^{(\ell)}} \left( v_\star^\beta,\beta \right) - \frac{\partial F}{\partial g_{jk}^{(\ell)}} \left( v_\star^0,0 \right) \right) = \frac{1}{\beta} \left[ \left( (v_j^{(\ell)})^\beta - (v_k^{(\ell+1)})^\beta \right)^2 - \left( (v_j^{(\ell)})^0 - (v_k^{(\ell+1)})^0 \right)^2 \right].
\end{equation}
Finally, we update the weights (memristor conductances) using these gradient estimates, to perform (or approximate) one step of gradient descent on the cost function:
\begin{equation}
\Delta g = - \eta \widehat{\nabla}_g(\beta).
\end{equation}
The learning rule for each weight (memristor) is local, and notably, it is agnostic to the $i$-$v$ characteristics of the diodes. Whether the network components are ideal or nonideal, the EP algorithm and its learning rules remain unchanged. This feature makes EP robust to variability of analog devices, particularly untrainable nonlinear elements (diodes), which is a key advantage of DRNs and EP over analog implementations of neural networks and BP. Intuitively, the robustness of EP arises because the same network is used for both phases of training (free and nudge), so that nonidealities affect both states similarly and thus cancel out in the gradient calculation. EP thus solves the mismatch problem encountered in BP between the nonlinear activation function and its derivative.

In addition to the resistive networks considered in this work, EP has been applied to various energy-based systems, such as continuous Hopfield networks \citep{scellier2017equilibrium}, Ising machines \citep{laydevant2024training} and coupled phase oscillators \citep{wang2024training}. A variant of EP called `coupled learning' has been developed and used in other systems as well, such as flow and elastic networks \citep{stern2021supervised,altman2024experimental}. Recent works have extended EP to quantum systems too \citep{massar2025equilibrium,wanjura2024quantum,scellier2024quantum}.

While EP's learning rule does not require knowledge of the characteristics of untrainable elements, it does require knowledge of the partial derivatives of the energy function with respect to trainable weights ($\frac{\partial E}{\partial g}$). However, a more advanced version of EP called `Agnostic Equilibrium Propagation' has been introduced, which does not require knowledge of these partial derivatives either, and allows the weight updates to be carried out using physical dynamics \citep{scellier2022agnostic}.

\end{document}